\newcommand{\hf}{\hat{f}}
\par\noindent{\bfseries\upshape
  #1\ }
\newcommand{\Comments}{0}
\newcommand{\mynote}[2]{\ifnum\Comments=1\textcolor{#1}{#2}\fi}
\definecolor{blue}{rgb}{0,0.0,0.8}
\newcommand{\Regret}{\operatorname{Regret}}
\newcommand{\Reward}{\operatorname{Reward}}
\newcommand{\q}{q}   %
\newcommand{\qv}{v}  %
\renewcommand{\ng}{\theta}
\newcommand{\G}{\mathcal{G}}
\newcommand{\W}{\mathcal{W}}
\newcommand{\opteps}{\eps}
\newcommand{\approxeps}{\hat{\eps}}
\newcommand{\Rad}{\{-1, 1\}}
\DeclareMathOperator*{\argmin}{arg\,min}
\DeclareMathOperator*{\argmax}{arg\,max}
\def \eps {\epsilon}
\newcommand{\eqr}[1]{\eqref{eq:#1}}
\newcommand {\norm}[1]{\ensuremath{\| #1 \|}}
\newcommand {\dnorm}[1]{\ensuremath{\| #1 \|_*}}
\newcommand {\ang}[1]{\left\langle#1\right\rangle}
\newcommand{\BO}{\mathcal{O}}
\newcommand{\R}{\ensuremath{\mathbb{R}}}
\newcommand{\Hilbert}{\ensuremath{\mathcal{H}}}
\newcommand{\grad}{\triangledown}
\newcommand{\abs}[1]{|#1|}
\newcommand {\bef}[2]{\exp\left(\frac{#1}{#2}\right)}
\DeclareMathOperator*{\E}{\mathbb{E}}
\newcommand{\qqand}{\qquad \text{and} \qquad}
\newcommand{\h}{\frac{1}{2}}
\newtheorem {theorem}{Theorem}}{}
\newtheorem {lemma}[theorem]{Lemma}}{}
\newtheorem {corollary}[theorem]{Corollary}}{}
\newcommand{\bu}{\boldsymbol{u}}
\newcommand{\bw}{\boldsymbol{w}}
\renewcommand{\ss}{\subseteq}
\newcommand{\coltonly}[1]{}
\newcommand{\arxivonly}[1]{#1}
\title{
Unconstrained Online Linear Learning in Hilbert Spaces:\\
 Minimax Algorithms and Normal Approximations}
\author{
H. Brendan McMahan\thanks{Both authors thank Jacob Abernethy for insightful feedback on this work.} \\
\small{Google} \\
\texttt{\small mcmahan@google.com}
\and
Francesco Orabona\footnotemark[1] \\ 
\small{TTI-C} \\
\texttt{\small orabona@ttic.edu}
}
\renewenvironment{abstract}
 {\small
  \begin{center}
  \bfseries \abstractname\vspace{-.5em}\vspace{0pt}
  \end{center}
  \list{}{%
    \setlength{\leftmargin}{20mm}%
    \setlength{\rightmargin}{\leftmargin}%
  }%
  \item\relax}
 {\endlist}
\begin{document}

\maketitle

\begin{abstract}
  We study algorithms for online linear optimization in Hilbert
  spaces, focusing on the case where the player is unconstrained.  We
  develop a novel characterization of a large class of minimax
  algorithms, recovering, and even improving, several previous results
  as immediate corollaries.  Moreover, using our tools, we develop an
  algorithm that provides a regret bound of $\BO\Big(U \sqrt{T \log( U
    \sqrt{T} \log^2 T +1)}\Big)$, where $U$ is the $L_2$ norm of an arbitrary
  comparator and both $T$ and $U$ are unknown to the player. This
  bound is optimal up to $\sqrt{\log \log T}$ terms.  When $T$ is
  known, we derive an algorithm with an optimal regret bound (up to
  constant factors).  For both the known and unknown $T$ case, a Normal
  approximation to the conditional value of the game proves to be the
  key analysis tool.
\end{abstract}

\coltonly{
\begin{keywords}
Online learning, minimax analysis, online convex optimization
\end{keywords}
}

\section{Introduction}
\noindent

The online learning framework provides a scalable and flexible approach for modeling a wide range of prediction problems, including classification, regression, ranking, and portfolio management. Online algorithms work in rounds, where at each round a new instance is given and the algorithm makes a prediction. Then the environment reveals the label of the instance, and the learning algorithm updates its internal hypothesis. The aim of the learner is to minimize the cumulative loss it suffers due to its prediction error.

Research in this area has mainly focused on designing new prediction strategies and proving theoretical guarantees for them.
However, recently, minimax analysis has been proposed as a general tool to design optimal prediction strategies~\citep{rakhlin12relax,rakhlin13localization,mcmahan13minimax}.
The problem is cast as a sequential multi-stage zero-sum game between the player (the learner) and an adversary (the environment), providing the \emph{optimal} strategies for both. In some cases the value of the game can be calculated exactly in an efficient way~\citep{abernethy08}, in others upper bounds on the value of the game (often based on the sequential Rademacher complexity) are used to construct efficient algorithms with theoretical guarantees~\citep{rakhlin12relax}.

While most of the work in this area has focused on the setting where
the player is constrained to a bounded convex set~\citep{abernethy08} (with
the notable exception of \citet{mcmahan13minimax}), in this work we
are interested in the general setting of unconstrained online learning with linear
losses in Hilbert spaces. In Section~\ref{sec:newanalysis}, extending the work of
\citet{mcmahan13minimax}, we provide novel and general sufficient conditions to be
able to compute the exact minimax strategy for both the player and
the adversary, as well as the value of the game.
In particular, we show that under these conditions the optimal play of the adversary is
\emph{always orthogonal or always parallel to the sum of his previous plays}, while \emph{the optimal play of the player is always parallel}.
On the other hand, for some cases where the exact minimax strategy is
hard to characterize, we introduce a new relaxation procedure based on
a Normal approximation.  In the particular application of interest, we
show the relaxation is strong enough to yield an optimal regret bound,
up to constant factors.

In Section~\ref{sec:palgs}, we use our new tools to recover and extend
previous results on minimax strategies for linear online learning,
including results for bounded domains.  In fact, we show how to obtain
a family of minimax strategies that smoothly interpolates between the
minimax algorithm for a bounded feasible set and a minimax optimal
algorithm in fact equivalent to unconstrained gradient descent.
We emphasize that \emph{all} the algorithms from this family are
exactly minimax optimal,\footnote{In this work, we use the term
  ``minimax'' to refer to the exact minimax solution to the zero sum
  game, as opposed to algorithms that only achieve the minimax optimal
  rate up to say constant factors.} in a sense we will make precise in
the next section.
Moreover, if you are allowed to play outside of the comparator set, we
show that some members of this family have a non-vacuous regret bound
for the unconstrained setting, while remaining optimal for the constrained
one.

When studying unconstrained problems, a natural question is how small we can make the dependence of the regret bound on $U$, the $L_2$ norm of an arbitrary comparator point, while still maintaining a $\sqrt{T}$ dependency on the time horizon.  The best algorithm from the above family achieves $\Regret(U) \leq \h(U^2 + 1)\sqrt{T}$.
\citet{streeter12unconstrained} and \citet{orabona13dimfree} show  it is
possible to reduce the dependence on $U$ to $\BO(U \log U T)$.  In order to improve on this, in Section~\ref{sec:expg} we apply our techniques to analyze a strategy, based on a Normal potential function, that gives a regret bound of $\BO\Big(U \sqrt{T \log( U \sqrt{T} \log^2 T +1)}\Big)$ where $U$ is the $L_2$ norm of a comparator, and both $T$ and $U$ are unknown. This
bound is optimal up to $\sqrt{\log \log T}$ terms.
Moreover, when $T$ is known, we propose an algorithm based on a similar potential function that is optimal up to constant terms.
This solves the open problem posed in those papers,
matching the lower bound for this problem.
Table~\ref{tab:bounds} summarizes the regret bounds we prove, along with those for related algorithms.

Our analysis tools for both known-$T$ and unknown horizon algorithms
rest heavily on the relationship between the reward (negative loss)
achieved by the algorithm, potential functions that provide a
benchmark for the amount of reward the algorithm should have, the
regret of the algorithm with respect to a post-hoc comparator $u$, and
the conditional value of the game.  These are familiar concepts from
the literature, but we summarize these relationships and provide some
modest generalizations in Section~\ref{sec:potential}.

\newcommand{\T}{\rule{0pt}{2.2ex}}
\newcommand{\ResultTable}{%
\begin{table}
\begin{small}
\renewcommand{\arraystretch}{1.3}
\begin{tabular}{l l l l}
\multicolumn{3}{l}{
\noindent
\textbf{Regret bounds for known-$T$ algorithms}}\\
\hline
(A) & \T Minimax Regret & $\sqrt{T}$ for $u \in \W$, $\BO(T)$ otherwise
   & \cite{abernethy08}\\
(B) & \T OGD, fixed $\eta$ & $\h(1 + U^2)\sqrt{T}$
   & E.g., \citet{shalevshwartz12online}\\
(C) & $pq$-Algorithm & $\big(\frac{1}{p} + \frac{1}{q}U^q\big) \sqrt{T}$
   & Cor.~\ref{cor:palg}, which also covers (A) and (B)\\
(D) & Reward Doubling & $\BO\big(U \sqrt{T} \log(d (U + 1) T) \big)$
   & \citet{streeter12unconstrained}\\
(E) & Normal Potential, $\eps=1$
   & $\BO\left(U \sqrt{T \log \big(U T + 1\big)}\right)$
   & Theorem~\ref{thm:expgknownT}\\
(F) & Normal Potential, $\eps=\sqrt{T}$
   & $\BO\left( (U+1)\sqrt{T\log(U + 1)}\right)$
   & Theorem~\ref{thm:expgknownT}\\
\multicolumn{3}{l}{
\noindent
\textbf{Regret bounds for adaptive algorithms for unknown $T$}}\\
\hline
(G) & \T Adaptive FTRL/RDA & $(1 + \h U^2)\sqrt{T}$
   & \citet{Shalev-Shwartz07,xiao09dualaveraging} \\
(H) & Dim. Free Exp. Grad. &
   $\BO\big(U \sqrt{T} \log(U T + 1)\big)$
   &\citet{orabona13dimfree} \\
(I) & AdaptiveNormal
   & $\BO^*\left(U \sqrt{T \log\big(U T + 1\big)}\right)$
   & Theorem~\ref{thm:expgadapt} \\
\end{tabular}
\caption{\arxivonly{Summary of regret bounds for unconstrained linear optimization.}  Here $U = \norm{u}$ is the norm of a comparator, with $U$ unknown to the algorithm.  We let $\W = \{w : \norm{w} \leq 1\}$; the adversary plays gradients with $\norm{g_t} \le 1$.  (A) is minimax optimal for regret against points in $\W$, and always plays points from $\W$.  The other algorithms are unconstrained.  Even though (A) is minimax optimal for regret, other algorithms (e.g. (B)) offer strictly better bounds for arbitrary $U$.  (C) corresponds to a family of minimax optimal algorithms where $\frac{1}{p} + \frac{1}{q} = 1$; $p=2$ yields $(B)$ and as $p\rightarrow 1$ the algorithm becomes (A); Corollary~\ref{cor:palg} covers (A) exactly.  Only (D) has a dependence on $d$, the dimension of $\Hilbert$.  The $\BO^*$ in (I) hides an additional $\log^2(T+1)$ term inside the $\log$.
  }
\vspace{-0.15in}
\label{tab:bounds}
\end{small}
\end{table}}

\ResultTable

\section{Notation and Problem Formulation}
Let $\Hilbert$ be a Hilbert space with inner product $\langle \cdot, \cdot \rangle$. The associated norm is denoted by $\|\cdot\|$, i.e. $\|x\|=\sqrt{\langle x, x \rangle}$.
Given a closed and convex function $f$ with domain $S \ss \Hilbert$, we will denote its Fenchel conjugate by $f^* : \Hilbert \to \R$ where $f^*(u) = \sup_{v\in S} \bigl( \langle v , u \rangle - f (v)\bigr)$.

We consider a version of online linear optimization, a standard game for studying repeated decision making. On each of a sequence of rounds, a \emph{player} chooses an action $w_t \in \Hilbert$, an \emph{adversary} chooses a linear cost function
$g_t \in \G \subseteq \Hilbert$, and the player suffers loss $\langle w_t,
g_t \rangle$. For any sequence of plays $w_1, \dots w_T$ and $g_1, \dots, g_T$, we define the \emph{regret} against a comparator $u$ in the standard way:
\begin{eqnarray*}
\Regret(u) & \equiv & \sum_{t=1}^T \langle g_t ,w_t - u \rangle~.
\end{eqnarray*}
This setting is general enough to cover the cases of online learning in, for example, $\R^d$, in the vector space of matrices, and in a RKHS.
We also define the \emph{reward} of the algorithm, which is the earnings (or negative losses) of the player throughout the game:
\begin{eqnarray*}
\Reward & \equiv &  \sum_{t=1}^T \langle -g_t, w_t \rangle~.
\end{eqnarray*}
We write $\ng_t \equiv -g_{1:t}$, where we use the compressed
summation notation $g_{1:t} \equiv \sum_{s=1}^T g_s$.

\paragraph{The Minimax View}
It will be useful to consider a full game-theoretic characterization
of the above interaction when the number of rounds $T$ is known to
both players.  This approach that has received significant recent
interest \citep{abernethy08, binning, abernethy2010repeated,
  abernethy2008optimal, streeter12unconstrained}.

In the constrained setting, where the comparator vector $\bu \in \W$,  we have that the \emph{value of the game}, that is the regret when both the player and the adversary play optimally, is
\begin{align*}
V &\equiv \min_{w_1 \in \Hilbert} \max_{g_1 \in \G} \cdots
    \min_{w_T\in \Hilbert} \max_{g_T \in \G}
    \left( \sup_{\bu \in \W} \  \sum_{t=1}^T \langle w_t-u , g_t \rangle \right) \\
&= \min_{w_1 \in \Hilbert} \max_{g_1 \in \G} \cdots
    \min_{w_T\in \Hilbert} \max_{g_T \in \G}
    \left( \sum_{t=1}^T \langle w_t, g_t \rangle + \sup_{u \in \W} \langle u, \ng_T \rangle \right) \\
&= \min_{w_1 \in \Hilbert} \max_{g_1 \in \G} \cdots
    \min_{w_T\in \Hilbert} \max_{g_T \in \G}
    \left( \sum_{t=1}^T \langle w_t, g_t \rangle + B(\ng_T) \right),
\end{align*}
where
\begin{equation}\label{eq:stdregB}
B(\ng) = \sup_{w \in \W} \ang{w, \ng}~.
\end{equation}

Following \citet{mcmahan13minimax}, we generalize the game in terms of a generic
convex \emph{benchmark function} $B : \Hilbert \to \R$, instead of using the definition \eqref{eq:stdregB}. This allows us to analyze the constrained and unconstrained setting in a unified way.
Hence, the value of the game is the difference between the benchmark
reward $B(\ng_T)$ and the actual reward achieved by the player (under
optimal play by both parties).  Intuitively, viewing the units of
loss/reward as dollars, $V$ is the amount of starting capital we need
(equivalently, the amount we need to borrow) to ensure we end the game
with $B(\ng_T)$ dollars.  The motivation for defining the game in
terms of an arbitrary $B$ is made clear in the next section: It will
allow us to derive Regret bounds in terms of the Fenchel conjugate of
$B$.

We define inductively the \emph{conditional value of the game} after $g_1,
\dots, g_t$ have been played by
\[
 V_t(\ng_t) = \min_{w \in \Hilbert} \max_{g \in \G} \left(\langle g, w \rangle + V_{t+1}(\ng_t - g)\right)
\qquad \text{with} \qquad V_T(\ng_T) = B(\ng_T)~.
\]
Thus, we can view the notation $V$ for the value of the game as shorthand for
$V_0(\mathbf{0})$.
Under minimax play by both players, unrolling the previous equality, we have
$
\sum_{s=1}^t \langle g_s, w_s \rangle + V_t(-g_{1:t}) = V,
$
or for $t=T$,
\begin{equation}\label{eq:mmrt}
\Reward = \sum_{t=1}^T \ang{-g_t, w_t} = B(\ng_T) - V~.
\end{equation}
We also have that, given the conditional value of the game, a minimax-optimal strategy is
\begin{equation}
\label{eq:opt_strat_from_cond_value}
w_{t+1} = \argmin_w \max_{g \in \G} \ \langle g, w \rangle + V_{t+1}(\ng_t - g)~.
\end{equation}
\citet[Cor. 2]{mcmahan13minimax} showed that in the unconstrained
case, $V_t$ is a smoothed version of $B$, where the smoothing comes
from an expectation over future plays of the adversary.  In this
work, we show that in some cases (Theorem~\ref{thm:exact_minmax}) we can
find a closed form for $V_t$ in terms of $B$, and in fact the solution
to \eqr{opt_strat_from_cond_value} will simply be the gradient of
$V_{t}$, or equivalently, an FTRL algorithm with regularizer $V_t^*$.
On the other hand, to derive our main results, we face a case
(Theorem~\ref{thm:normapprox}) where $V_t$ is generally \emph{not}
expressible in closed form, and the resulting algorithm does
\emph{not} look like FTRL.  We solve the first problem by using a
Normal approximation to the adversary's future moves, and we solve the
second by showing \eqr{opt_strat_from_cond_value} can still be solved
in closed form with respect to this approximation to $V_t$.

\section{Potential Functions and the Duality of Reward and Regret}\label{sec:potential}

In the present section we will review some existing results in online learning theory as well as provide a number of mild generalizations for our purposes.
\emph{Potential functions} play a major role in the design and analysis of online learning algorithms~\citep{Cesa-BianchiL06}.
We will use $\q : \Hilbert \to \R$ to describe the potential, and the key assumptions are that $\q$ should depend solely on the cumulative gradients $g_{1:T}$ and that $\q$ is convex in this argument.\footnote{It is sometimes possible to generalize to potentials $q_t(g_1, \dots, g_t)$ that are functions of each gradient individually.}
Since our aim is adaptive algorithms, we often look at a sequence of changing potential functions $\q_1, \ldots, \q_T$, each of which takes as argument $-g_{1:t}$ and is convex.
These functions have appeared with different interpretations in many
papers, with different emphasis.  They can be viewed as 1) the
conjugate of an (implicit) time-varying regularizer in a Mirror
Descent or Follow-the-Regularized-Leader (FTRL) algorithm~\citep{Cesa-BianchiL06,Shalev-Shwartz07,rakhlin2009lecture}, 2) as proxy for
the conditional value of the game in a minimax setting~\citep{rakhlin12relax}, or 3) a
potential giving a bound on the amount of reward we want the algorithm
to have obtained at the end of round $t$~\citep{streeter12unconstrained,mcmahan13minimax}.

These views are of course closely connected, but can lead to somewhat
different analysis techniques.  Following the last view, suppose we
interpret $\q_t(\ng_t)$ as the desired reward at the end of round $t$,
given the adversary has played $\ng_t = -g_{1:t}$ so far.  Then, if we
can bound our actual final reward in terms of $\q_T(\ng_T)$, we also
immediately get a regret bound stated in terms of the Fenchel
conjugate $q_T^*$.  Generalizing
\citet[Thm. 1]{streeter12unconstrained}, we have the following result (all omitted proofs can be found in the Appendix).
\begin{theorem}\label{thm:rrdual}
  Let $\Psi:\Hilbert \rightarrow \R$ be a convex function. An
  algorithm for the player guarantees
  \begin{equation}\label{eq:rewb}
  \Reward \geq \Psi(-g_{1:T}) - \approxeps
     \quad \quad \quad \textnormal{ for any } g_1, \dots, g_T
  \end{equation}
  for a constant $\approxeps \in \R$ if and only if it
  guarantees
  \begin{equation}\label{eq:regb}
  \qquad \Regret(u) \leq \Psi^*(u) + \approxeps
     \quad \quad \quad \textnormal{ for all } u \in \Hilbert~.
  \end{equation}
\end{theorem}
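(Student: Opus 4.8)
The plan is to prove the two directions of the equivalence separately, in each case using the definition of the Fenchel conjugate together with the identity $\Reward = \sum_t \langle -g_t, w_t\rangle$ and $\Regret(u) = \sum_t \langle g_t, w_t - u\rangle = \langle -g_{1:T}, u\rangle - \Reward$ --- that is, wait, let me recompute: $\Regret(u) = \sum_t \langle g_t, w_t\rangle - \sum_t \langle g_t, u\rangle = -\Reward + \langle -g_{1:T}, u\rangle$. Hmm, $-\sum_t\langle g_t,u\rangle = \langle -g_{1:T},u\rangle$, and $\sum_t\langle g_t,w_t\rangle = -\Reward$, so $\Regret(u) = \langle -g_{1:T}, u\rangle - \Reward = \langle \ng_T, u\rangle - \Reward$. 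This single algebraic identity linking $\Regret(u)$, $\Reward$, and the pairing $\langle \ng_T, u\rangle$ is the crux; everything else is an application of the Fenchel--Young inequality and its tightness.

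First I would prove \eqref{eq:rewb} $\Rightarrow$ \eqref{eq:regb}. Fix any $u \in \Hilbert$ and any gradient sequence $g_1,\dots,g_T$. Using the identity above and the hypothesis $\Reward \ge \Psi(\ng_T) - \approxeps$, we get
\[
\Regret(u) = \langle \ng_T, u\rangle - \Reward \le \langle \ng_T, u\rangle - \Psi(\ng_T) + \approxeps \le \sup_{\theta \in \Hilbert}\bigl(\langle \theta, u\rangle - \Psi(\theta)\bigr) + \approxeps = \Psi^*(u) + \approxeps,
\]
where the last supremum is exactly the definition of $\Psi^*(u)$ and is finite-or-not handled by allowing values in $\R \cup \{+\infty\}$. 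Since $u$ was arbitrary, \eqref{eq:regb} holds.

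For the converse \eqref{eq:regb} $\Rightarrow$ \eqref{eq:rewb}, fix a gradient sequence and let $\theta = \ng_T = -g_{1:T}$. If $\Psi(\theta) = +\infty$ the bound \eqref{eq:rewb} is vacuous, so assume $\Psi(\theta) < \infty$. The idea is to pick the comparator $u$ that makes Fenchel--Young tight at $\theta$, i.e.\ $u \in \partial \Psi(\theta)$; convexity of $\Psi$ guarantees the subdifferential is nonempty on the interior of its domain, and for the boundary case one takes a supremizing sequence $u_n$ with $\langle \theta, u_n\rangle - \Psi^*(u_n) \to \Psi(\theta)$, which exists by biconjugation ($\Psi^{**} = \Psi$ for closed convex $\Psi$). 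For such $u$ (or along the sequence), $\langle \theta, u\rangle - \Psi^*(u) = \Psi(\theta)$, and combining with \eqref{eq:regb} applied at $u$,
\[
\Reward = \langle \theta, u\rangle - \Regret(u) \ge \langle \theta, u\rangle - \Psi^*(u) - \approxeps = \Psi(\theta) - \approxeps,
\]
which is \eqref{eq:rewb}. The main obstacle is the technical care needed when $\Psi$ is not subdifferentiable at $\ng_T$ or when $\Psi^*$ takes the value $+\infty$: one must either invoke the Fenchel--Moreau theorem to justify the supremizing-sequence argument or restrict attention to $\Psi$ closed (lower semicontinuous), which is harmless here since all potentials used later are continuous. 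A clean writeup would state the closedness assumption explicitly (or note $\Psi$ may be replaced by its closure without affecting either inequality) and then the argument is a two-line application of Fenchel duality in each direction.
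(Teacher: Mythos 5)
Your proposal is correct and follows essentially the same route as the paper's proof: both directions hinge on the identity $\Regret(u) = \langle \ng_T, u\rangle - \Reward$ combined with the definition of the Fenchel conjugate and, for the converse, the biconjugation $\Psi^{**}=\Psi$ (the paper realizes this by writing $\Reward = \max_v(\langle\ng_T, v\rangle - \Regret(v))$ and bounding each term, while you pick a tight or supremizing comparator, which is the same computation). Your explicit remark that the converse needs $\Psi$ closed (or replaced by its closure) is a small technical point the paper elides but does not change the argument.
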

First we consider the minimax setting, where we define the game in
terms of a convex benchmark $B$.  Then, \eqr{mmrt} gives us an
immediate lower bound on the reward of the minimax strategy for the player
(against any adversary), and so applying Theorem
\ref{thm:rrdual} with $\Psi = B$ gives
\begin{equation}\label{eq:mmrg}
\forall u \in \Hilbert, \qquad \Regret(u) \leq B^*(u) + V~.
\end{equation}
The fundamental point, of which we will make much use, is this: even
if one only cares about the traditional definition of regret, the
study of the minimax game defined in terms of a general comparator
benchmark $B$ may be interesting, as the minimax algorithm for the
player may then give novel bounds on regret.  Note when $B$ is defined
as in \eqref{eq:stdregB}, the theorem implies $\forall u \in \W,\
\Regret(u) \leq V$.
More generally, even for non-minimax algorithms,
Theorem~\ref{thm:rrdual} states that understanding the reward
(equivalently, loss) of an algorithm as a function of the sum of
gradients chosen by the adversary is both necessary and sufficient for
understanding the regret of the algorithm.

Now we consider the potential function view.  The following general
bound for any sequence of plays $w_t$ against gradients $g_t$, for an
arbitrary sequence of potential functions $q_t$, has been used numerous
times (see \citet[Lemma 1]{orabona13dimfree} and references therein).
The claim is that
\begin{equation}\label{eq:regf2}
\Regret(u) \leq \q_T^*(u)
+ \sum_{t=1}^T \left(\q_t(\ng_t) - \q_{t-1}(\ng_{t-1}) + \langle w_t, g_t \rangle \right),
\end{equation}
where we take $\ng_0 = \vec{0}$, and assume $q_0(\vec{0}) = 0$.  In fact, this
statement is essentially equivalent to the argument of \eqref{eq:rewb} and \eqref{eq:regb}.  For intuition, we can view
$\q_t(\ng_t)$ as the amount of money we wish to have available at
the end of round $t$.  Suppose at the end of each round $t$, we borrow
an additional sum $\opteps_t$ as needed to ensure we actually have
$\q_t(\ng_t)$ on hand.  Then, based on this invariant, the amount of
reward we actually have after playing on round $t$ is
$\q_{t-1}(\ng_{t-1}) + \langle w_t, -g_t \rangle $, the money we had
at the beginning of the round, plus the reward we get for playing
$w_t$.  Thus, the additional amount we need to borrow at the end of
round $t$ in order to maintain the invariant is exactly
\begin{equation} \label{eq:epst}
\opteps_t(\ng_{t-1}, g_t) \equiv
\underbrace{\q_t(\ng_t)}_{\text{Reward desired}} - \underbrace{
\big(\q_{t-1}(\ng_{t-1}) + \langle w_t, -g_t \rangle \big)}_{
\text{Reward achieved}},
\end{equation}
recalling $\ng_t = \ng_{t-1} - g_t$.  Thus, if we can find bounds
$\approxeps_t$ such that for all $t$, $\ng_{t-1}$, and $g \in \G$,
\begin{equation}\label{eq:approxepst}
\approxeps_t \geq \opteps_t(\ng_{t-1}, g_t)
\end{equation}
we can re-state \eqref{eq:regf2} as exactly \eqref{eq:regb} with $\Psi
= q_T$ and $\approxeps = \approxeps_{1:T}$.  Further, solving
\eqref{eq:epst} for the per-round reward $\langle w_t, -g_t \rangle$,
summing from $t=1$ to $T$ and canceling telescoping terms gives
exactly \eqref{eq:rewb}.  Not surprisingly, both
Theorem~\ref{thm:rrdual} and \eqref{eq:regf2} can be proved in terms
of the Fenchel-Young inequality.

When $T$ is known, and the $q_t$ are chosen carefully, it is possible
to obtain $\approxeps_t = 0$.  On the other hand, when $T$ is unknown
to the players, typically we will need bounds $\approxeps_t > 0$.  For
example, in both \citet[Thm. 6]{streeter12unconstrained} and
\citet{orabona13dimfree}, the key is showing the sum of these
$\approxeps_t$ terms is always bounded by a constant.
For completeness, we also state
standard results where we interpret $q^*_t$ as a regularizer.

\paragraph{The conjugate regularizer and Bregman divergences}
The updates of many algorithms are based on a time-varying version of the FTRL strategy,
\begin{equation}
\label{eq:ftrl}
w_{t+1} = \nabla \q_t(\ng_t)
        = \argmin_w \ \langle g_{1:t}, w \rangle + q^*_t(w),
\end{equation}
where we view $\q_t^*$ as a time-varying regularizer (see
\citet{OrabonaCCB13} and references therein).  Regret bounds can be
easily obtained using \eqref{eq:regf2} when the regularizers
$q^*_t(w)$ are increasing with $t$, and they are strongly convex
w.r.t. a norm $\dnorm{\cdot}$, using the fact that the potential functions $q_t$ will
be strongly smooth.
Then strong smoothness and particular choice of $w_t$ implies
\begin{equation} \label{eq:ss}
\q_{t-1}(\ng_t) \leq  \q_{t-1}(\ng_{t-1}) - \langle w_t, g_t \rangle+ \h \norm{g_t}^2,
\end{equation}
which leads to the bound
\begin{align*}
 \opteps_t(\ng_t,g_t) = \q_t(\ng_t) - \q_{t-1}(\ng_{t-1}) +  \langle w_t, g_t \rangle
\ \le \
 \q_t(\ng_t) -\q_{t-1}(\ng_t)  + \h \norm{g_t}^2
\ \le\  \h \norm{g_t}^2,
\end{align*}
where the last inequality follows from the fact that if $f(x) \le g(x)$,
then $f^*(y) \ge g^*(y)$ (immediate from the definition of the
conjugate).

When the regularizer $q^*$ is fixed, that is,
$q_t = q$ for all $t$ for some convex function $q$, we get
the approach pioneered by \citet{grove2001general} and \citet{kivinen2001relative}:
\[
 \opteps_t(\ng_t,g_t)
 = \q(\ng_t) - \q(\ng_{t-1}) + \langle w_t, g_t \rangle
 = \q(\ng_t) - \big(\q(\ng_{t-1}) + \langle \nabla q(\ng_{t-1}), g_t \rangle \big)
 = D_\q(\ng_t, \ng_{t-1}),
\]
where $D_q$ is the Bregman Divergence with respect to $q$, and we predict with $w_t = \nabla q(\ng_{t-1})$.

\paragraph{Admissible relaxations and potentials}
We extend the notion of relaxations of the conditional value of the
game of \citet{rakhlin12relax} to the present setting.  We say $\qv_t$
with corresponding strategy $w_t$ is a relaxation of $V_t$
if
\begin{align}
\forall \ng,& \qquad \qv_T(\ng) \geq B(\ng) \label{eq:hVb} && \text{and}\\
\forall t \in \{0, \dots, T-1\}, g \in \G, \ng \in \Hilbert,
  & \qquad \qv_t(\ng) + \approxeps_{t+1}
   \geq \langle g, w_{t+1} \rangle + \qv_{t+1}(\ng - g),\label{eq:repst}
\end{align}
for constants $\approxeps_t \geq 0$.  This definition matches Eq.~(4)
of \citet{rakhlin12relax} if we force all $\approxeps_t = 0$, but if we
allow some slack $\approxeps_t$, \eqref{eq:repst} corresponds exactly to
\eqref{eq:epst} and \eqref{eq:approxepst}.

Note that \eqref{eq:repst} is invariant to adding a constant to all
$\qv_t$.  In particular, given an admissible $\qv_t$, we can define
$\q_t(\ng) = \qv_t(\ng) - \qv_0(\vec{0})$ so $\q_t(\vec{0}) = 0$ and $q$ satisfies
\eqref{eq:approxepst} with the same $\approxeps_t$ values for which $\qv_t$
satisfies \eqref{eq:repst}.  Or we could define $\q_0(\vec{0}) = 0$ and $\q_t(\ng)
= \qv_t(\ng)$ for $t \geq 1$, and take $\approxeps_1 \leftarrow
\approxeps_1 + \qv_0(\vec{0})$ (or any other way of distributing the
$v_0(\vec{0})$ into the $\approxeps$).
Generally, when $T$ is known we will find working with admissible
relaxations $\qv_t$ to be most useful, while for unknown horizons $T$,
potential functions with $q_0(\vec{0})=0$ will be more natural.

For our admissible relaxations, we have a result that closely mirrors
Theorem~\ref{thm:rrdual}:
\begin{corollary}\label{cor:admbench}
  Let $\qv_0, \dots, \qv_T$ be an admissible relaxation for a
  benchmark $B$.  Then, for any sequence $g_1, \dots, g_T$, for any
  $w_t$ chosen so \eqref{eq:repst} and \eqref{eq:hVb} are satisfied, we have
  \[
  \Reward \geq B(\ng_T) - \qv_0(0) - \approxeps_{1:T}
  \qqand
  \Regret(u) \leq B^*(u) +\qv_0(0) + \approxeps_{1:T}~.
  \]
\end{corollary}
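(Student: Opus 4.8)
The plan is to reduce both claims to a single telescoping inequality followed by one application of Fenchel--Young (equivalently, Theorem~\ref{thm:rrdual} with $\Psi = B$). Nothing deep is needed; the work is bookkeeping the additive constant $\qv_0(\vec 0)$ and the per-round slacks $\approxeps_t$.

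First I would establish the reward bound. Fix any sequence $g_1, \dots, g_T$ and write $\ng_t = -g_{1:t}$, so $\ng_0 = \vec 0$ and $\ng_t = \ng_{t-1} - g_t$. Instantiating the admissibility condition~\eqref{eq:repst} at round $t-1$ with $\ng = \ng_{t-1}$ and $g = g_t$ gives
\[
\qv_{t-1}(\ng_{t-1}) + \approxeps_t \;\geq\; \langle g_t, w_t \rangle + \qv_t(\ng_{t-1} - g_t) \;=\; \langle g_t, w_t \rangle + \qv_t(\ng_t),
\]
i.e.\ $\qv_t(\ng_t) \leq \qv_{t-1}(\ng_{t-1}) + \langle -g_t, w_t \rangle + \approxeps_t$. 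Summing over $t = 1, \dots, T$ and cancelling the telescoping $\qv_t$ terms yields
\[
\qv_T(\ng_T) \;\leq\; \qv_0(\vec 0) + \sum_{t=1}^T \langle -g_t, w_t \rangle + \approxeps_{1:T} \;=\; \qv_0(\vec 0) + \Reward + \approxeps_{1:T}.
\]
Combining with the terminal condition~\eqref{eq:hVb}, namely $B(\ng_T) \leq \qv_T(\ng_T)$, and rearranging gives exactly $\Reward \geq B(\ng_T) - \qv_0(\vec 0) - \approxeps_{1:T}$, the first assertion.

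For the regret bound I would use the elementary identity $\Regret(u) = \sum_{t=1}^T \langle g_t, w_t - u \rangle = -\Reward + \langle \ng_T, u \rangle$ (since $\Reward = -\sum_t \langle g_t, w_t\rangle$ and $\ng_T = -g_{1:T}$). Substituting the reward lower bound just proved,
\[
\Regret(u) \;\leq\; \langle \ng_T, u \rangle - B(\ng_T) + \qv_0(\vec 0) + \approxeps_{1:T} \;\leq\; B^*(u) + \qv_0(\vec 0) + \approxeps_{1:T},
\]
where the final step is just the definition $B^*(u) = \sup_{\ng}\bigl(\langle \ng, u\rangle - B(\ng)\bigr)$ applied at $\ng = \ng_T$. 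Alternatively, once \eqref{eq:rewb} holds with $\Psi = B$ and constant $\approxeps = \qv_0(\vec 0) + \approxeps_{1:T}$, one simply invokes Theorem~\ref{thm:rrdual} to get the regret statement.

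I do not expect any genuine obstacle. The only point requiring care is that \eqref{eq:repst} is invariant to adding a constant to all the $\qv_t$, so the constant $\qv_0(\vec 0)$ must be tracked separately; and the slacks must be indexed so that the $T$ terms $\approxeps_1, \dots, \approxeps_T$ appearing in \eqref{eq:repst} (as $\approxeps_{t+1}$ for $t = 0, \dots, T-1$) sum to $\approxeps_{1:T}$. This is precisely the identification of \eqref{eq:repst} with \eqref{eq:epst}--\eqref{eq:approxepst} discussed in the text, where $\qv_0(\vec 0)$ may instead be folded into $\approxeps_1$.
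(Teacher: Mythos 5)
Your proposal is correct and follows essentially the same route as the paper: telescoping \eqref{eq:repst} together with the terminal condition \eqref{eq:hVb} to obtain the reward lower bound, then passing to the regret bound via Theorem~\ref{thm:rrdual} (equivalently, one direct application of the Fenchel--Young inequality). The paper's proof is just a more compressed statement of exactly these two steps.
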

\begin{proof}
  For the first statement, re-arranging and summing \eqref{eq:repst} shows
  $\Reward_t \ge \qv_t(\ng_t) -\approxeps_{1:t} - \qv_0(0)$ and so
  final $\Reward \geq B(\ng) - \qv_0(0) - \approxeps_{1:T}$; the
  second result then follows from Theorem~\ref{thm:rrdual}.
\end{proof}
The regret bound corresponds to \eqref{eq:mmrg}; in particular, if we take
$\qv_t$ to be the conditional value of the game, then \eqref{eq:hVb} and \eqref{eq:repst} hold with equality with all
$\approxeps_t=0$.  Note if we define $B$ as in \eqref{eq:stdregB}, the
regret guarantee becomes
$
\forall u \in \W,\ \Regret(u) \leq \qv_0(0) + \approxeps_{1:T},
$
analogous to \citep[Prop. 1]{rakhlin12relax} when $\approxeps_{1:T} = 0$.

\paragraph{Deriving algorithms}
Consider an admissible relaxation $\qv_t$.  Given
the form of the regret bounds we have proved, a natural strategy is to
choose $w_{t+1}$ so as to minimize $\approxeps_{t+1}$, that is,
\begin{equation}
w_{t+1} %
        = \argmin_w \max_{g \in \G} \  \qv_{t+1}(\ng_t - g) - \qv_t(\ng_t) + \langle g, w \rangle
        = \argmin_w \max_{g \in \G} \ \langle g, w \rangle + \qv_{t+1}(\ng_t - g), \label{eq:mmq}
\end{equation}
following \citet[Eq. (5)]{rakhlin12relax}, \citet{rakhlin13localization}, and
\citet[Eq. (8)]{streeter12unconstrained}.  We see that
$\qv_{t+1}$ is standing in for the conditional value of the game in \eqref{eq:opt_strat_from_cond_value}.
Since additive constants do not impact the argmin, we could also
replace $\qv_t$ with a potential $\q_t$, say $\q_t(\ng) = \qv_t(\ng) -
\qv_0(0)$.

\section{Minimax Analysis Approaches for Known-Horizon Games}
\label{sec:newanalysis}

In general, the problem of calculating the conditional value of a game
$V_t(\ng)$ is hard.  And even for a known potential, deriving an
optimal solution via \eqref{eq:mmq} is also in general a hard problem.
When the player is unconstrained, we can simplify the computation of
$V_t$ and the derivation of optimal strategies.  For example,
following ideas from \citet{mcmahan13minimax},
\[
 \opteps_t(\theta_t) = \max_{p \in \Delta(\G), \E_{g \sim p}[g] = 0} \ \E_{g \sim p}[
   \q_{t+1}(\ng_t - g)] - \q_t(\ng_t),
\]
where $\Delta(\G)$ is the set of probability distributions on $\G$.
\citet{mcmahan13minimax} shows that in some cases is possible to
easily calculate this maximum, in particular when $\G = [-G, G]^d$ and
$\q_t$ decomposes on a per-coordinate spaces (that is, when the
problem is essentially $d$ independent, one-dimensional problems).

In this section we will state two quite general cases where we can
obtain the exact value of the game, even though the problem
does not decompose on a per coordinate basis. Note that in both cases
the optimal strategy for $w_{t+1}$ will be in the direction of $\ng_t$.

We study the game when the horizon $T$ is known, with a benchmark
function of the form $B(\ng)=f(\norm{\ng})$ for an increasing convex
function $f\!:\![0,+\infty] \rightarrow \R$ (which ensures $B$ is
convex).  Note this form for $B$ is particularly natural given our
desire to prove results that hold for general Hilbert spaces.  We will
then be able to derive regret bounds using Theorem~\ref{thm:rrdual},
and the following technical lemma:
\begin{lemma}
\label{lemma:fenchel_norm}
  Let $B(\ng) = f(\norm{\ng})$ for $f\!: \R \rightarrow\, (\!-\!\infty,+\infty]$ even.  Then, $B^*(u) = f^*(\norm{u})$.
\end{lemma}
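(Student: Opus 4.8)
The plan is to manipulate the definition of the Fenchel conjugate directly, decomposing the supremum over $\Hilbert$ into an inner supremum over each sphere $\{\ng : \norm{\ng} = r\}$ followed by an outer supremum over the radius $r \geq 0$. First I would write
\[
B^*(u) = \sup_{\ng \in \Hilbert} \big( \ang{\ng, u} - f(\norm{\ng}) \big)
       = \sup_{r \geq 0} \Big( \sup_{\norm{\ng} = r} \ang{\ng, u} - f(r) \Big).
\]
By Cauchy--Schwarz, $\ang{\ng, u} \leq \norm{\ng}\,\norm{u} = r\,\norm{u}$ for every $\ng$ with $\norm{\ng} = r$, with equality attained at $\ng = r\, u / \norm{u}$ when $u \neq 0$ (and the bound is trivial when $u = 0$). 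Hence the inner supremum equals $r\,\norm{u}$, and so $B^*(u) = \sup_{r \geq 0}\big( r\,\norm{u} - f(r)\big)$.

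It then remains to identify this one-dimensional supremum with $f^*(\norm{u}) = \sup_{r \in \R} (r\,\norm{u} - f(r))$. The only discrepancy is that $f^*$ takes the supremum over all of $\R$, whereas we have only a supremum over $r \geq 0$. Here I would invoke that $f$ is even: for $r < 0$ we have $r\,\norm{u} \leq \abs{r}\,\norm{u}$ (since $\norm{u} \geq 0$) and $f(r) = f(\abs{r})$, so $r\,\norm{u} - f(r) \leq \abs{r}\,\norm{u} - f(\abs{r})$. Thus restricting the supremum to $r \geq 0$ does not change its value, and we conclude $B^*(u) = f^*(\norm{u})$.

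I do not anticipate a serious obstacle; the argument is elementary. The one point requiring mild care is the supremum-over-the-sphere step in a general, possibly infinite-dimensional, Hilbert space --- but Cauchy--Schwarz together with the explicit maximizer $r\,u/\norm{u}$ settles it cleanly, and the degenerate case $u = 0$ should be checked separately (both sides reduce to $-\inf_{r \geq 0} f(r)$, again using evenness). It is worth emphasizing that although only the restriction of $f$ to $[0,\infty)$ enters the definition of $B$, stating $f$ as an even function on $\R$ is precisely what legitimizes the reduction from $\sup_{r \in \R}$ to $\sup_{r \geq 0}$, so the evenness hypothesis is used in an essential way.
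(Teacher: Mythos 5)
Your proposal is correct and follows essentially the same route as the paper's proof: both reduce the supremum over $\Hilbert$ to a one-dimensional supremum over $\alpha \geq 0$ by aligning $\ng$ with $u$ (your Cauchy--Schwarz step just makes the paper's ``by inspection'' explicit), treat $u=0$ separately, and use evenness of $f$ to pass from $\sup_{\alpha\geq 0}$ to $\sup_{\alpha\in\R}$.
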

Recall that $f$ is even if $f(x) = f(-x)$.
Our key tool will be a careful study of the one-round version of this
game.  For this section, we let $h\!:\!\R \rightarrow
\R$ be an even convex function that is
increasing on $[0, \infty]$, $\G=\{g: \|g\| \leq G\}$, and $d$ the
dimension of $\Hilbert$.  We consider the one-round game
\begin{equation}\label{eq:oneroundH}
H \equiv \min_{w} \max_{g \in \G}\  \langle w, g \rangle + h(\norm{\theta - g})~,
\end{equation}
where $\theta \in \Hilbert$ is a fixed parameter.
For results regarding this game, we let $H(w, g)=\langle w, g \rangle
+ h(\norm{\theta - g})$, $w^*=\argmin_w \max_{g \in \G} H(w,g)$, and
$g^*=\argmax_{g \in \G} H(w^*,g)$. Also, let
$\hat{\ng}=\frac{\ng}{\norm{\ng}}$ if $\norm{\ng} \neq 0$, and
$\vec{0}$ otherwise.

\subsection{The case of the orthogonal adversary}
\label{sec:orthogonal_adv}
Let $B(\ng)=f(\norm{\ng})$ for an increasing convex function $f:[0, \infty] \rightarrow \R$, and define 
\coltonly{$f_t(x)=f(\sqrt{x^2+G^2 (T-t)})$.}
\arxivonly{\[f_t(x)=f(\sqrt{x^2+G^2 (T-t)})~.\]}
Note that $f_t(\norm{\ng})$ can be viewed as a smoothed version of $B(\ng)$, since
$\sqrt{\norm{\ng}^2 + C}$ is a smoothed version of $\norm{\ng}$ for a
constant $C > 0$. Moreover, $f_0(\norm{\ng})=B(\ng)$.

Our first key result is the following:
\begin{theorem}\label{thm:exact_minmax}
  Let the adversary play from $\G=\{g: \|g\| \leq
  G\}$ and assume all the $f_t$ satisfy
  \begin{equation}\label{eq:strong}
    \min_{w} \max_{g \in \G} \ \ang{ w, g}  + f_{t+1}(\norm{\ng - g}) 
        = f_{t+1}\left(\sqrt{\|\ng\|^2 + G^2}\right)~.
  \end{equation}
  Then the value of the game is $f(G\sqrt{T})$, the conditional value
  is $V_t(\ng) = f_t(\norm{\ng}) = f_{t+1}(\sqrt{\norm{\ng}^2 + G^2})$, and the optimal strategy can be
  found using \eqref{eq:mmq} on $V_t$.

  Further, a sufficient condition for \eqref{eq:strong} is that $d>1$, $f$ is twice differentiable, and $f''(x) \leq f'(x)/x$,
  for all $x > 0$.
  In this case we also have that the minimax optimal strategy is 
  \begin{equation}
  \label{eq:opt_strat_orthogonal}
  w_{t+1} = \nabla V_{t}(\ng_t) 
   = \ng_t \frac{f'(\sqrt{\norm{\ng_t}^2 + G^2(T-t)})}{\sqrt{\norm{\ng_t}^2 + G^2(T-t)}}~.
  \end{equation}
\end{theorem}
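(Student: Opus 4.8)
I would split the argument into a backward induction that treats \eqref{eq:strong} as a hypothesis, and a self-contained analysis of the one-round game \eqref{eq:oneroundH} that discharges \eqref{eq:strong} from the curvature condition $f''\le f'/x$. For the induction, I claim $V_t(\ng)=f_t(\norm{\ng})$ by downward induction on $t$. For $t=T$, $V_T(\ng)=B(\ng)=f(\norm{\ng})=f_T(\norm{\ng})$ since $f_T(x)=f(\sqrt{x^2})=f(x)$ by evenness. For the step, substitute $V_{t+1}(\ng)=f_{t+1}(\norm{\ng})$ into the recursion $V_t(\ng)=\min_w\max_{g\in\G}\langle g,w\rangle+V_{t+1}(\ng-g)$; by \eqref{eq:strong} the right side equals $f_{t+1}(\sqrt{\norm{\ng}^2+G^2})$, and the elementary identity $f_{t+1}(\sqrt{x^2+G^2})=f\big(\sqrt{x^2+G^2+G^2(T-t-1)}\big)=f_t(x)$ closes the induction. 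Then $V=V_0(\vec 0)=f_0(0)=f(G\sqrt T)$, and the claim that \eqref{eq:mmq} applied to $V_t$ yields an optimal move is exactly \eqref{eq:opt_strat_from_cond_value} specialized to these now-known conditional values.

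Next I would analyze the one-round game. Fix $t$, set $h=f_{t+1}$, so the left side of \eqref{eq:strong} is \eqref{eq:oneroundH} with this $h$ and parameter $\ng$. First I verify $h$ inherits the structure of $f$: writing $h(x)=f(s(x))$ with $s(x)=\sqrt{x^2+G^2(T-t-1)}$, the function $h$ is even, convex, increasing on $[0,\infty)$, twice differentiable ($s$ is smooth when $T-t-1>0$, and $h=f$ when $T-t-1=0$), and a short computation using $s'=x/s$, $s''=G^2(T-t-1)/s^3$ shows $h''(x)\le h'(x)/x$ follows from $f''(s)\le f'(s)/s$. Since $g\mapsto\langle w,g\rangle+h(\norm{\ng-g})$ is convex, its maximum over $\{\norm g\le G\}$ is attained on the sphere $\norm g=G$. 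For the lower bound $H\ge h(\sqrt{\norm{\ng}^2+G^2})$ I use $d>1$: the adversary can pick $g$ with $\norm g=G$, $g$ orthogonal to $\ng$, and $\langle w,g\rangle\ge 0$ (take $g$ along the projection of $w$ onto the orthogonal complement of $\ng$, or any unit vector of that complement if the projection vanishes; if $\ng=0$ take $g$ along $w$); then $\norm{\ng-g}=\sqrt{\norm{\ng}^2+G^2}$ and $\langle w,g\rangle+h(\norm{\ng-g})\ge h(\sqrt{\norm{\ng}^2+G^2})$, so $\max_g(\cdot)\ge h(\sqrt{\norm{\ng}^2+G^2})$ for every $w$.

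For the matching upper bound — the crux — I would exhibit the candidate $w^\star=\lambda^\star\hat{\ng}$ with $\lambda^\star=\norm{\ng}\,h'(\sqrt{\norm{\ng}^2+G^2})/\sqrt{\norm{\ng}^2+G^2}$ (and $w^\star=0$ if $\ng=0$), and compute the adversary's best reply. By rotational symmetry about the $\ng$-axis this reduces to maximizing $\psi(c)=\lambda^\star Gc+h(r(c))$ over $c=\cos\phi\in[-1,1]$, where $r(c)=\sqrt{\norm{\ng}^2+G^2-2\norm{\ng}Gc}>0$ on $(-1,1)$ (here $d>1$ is what lets $c$ range over the full interval). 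Differentiating, $\psi'(c)=\lambda^\star G-\norm{\ng}G\,h'(r(c))/r(c)$ and $\psi''(c)=\frac{\norm{\ng}^2G^2}{r(c)^3}\big(r(c)h''(r(c))-h'(r(c))\big)$, so the curvature condition makes $\psi$ concave; since the choice of $\lambda^\star$ gives $\psi'(0)=0$, the maximizer is $c=0$, whence $\max_g H(w^\star,g)=\psi(0)=h(\sqrt{\norm{\ng}^2+G^2})$, giving $H\le h(\sqrt{\norm{\ng}^2+G^2})$. Combined with the previous paragraph this is exactly \eqref{eq:strong}.

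Finally, the minimax move $w_{t+1}$ is this $w^\star$ with $h=f_{t+1}$ and $\ng=\ng_t$, i.e.\ $w_{t+1}=\ng_t\,f_{t+1}'(\sqrt{\norm{\ng_t}^2+G^2})/\sqrt{\norm{\ng_t}^2+G^2}$. Expanding $f_{t+1}'(y)=f'(\sqrt{y^2+G^2(T-t-1)})\cdot y/\sqrt{y^2+G^2(T-t-1)}$ and evaluating at $y=\sqrt{\norm{\ng_t}^2+G^2}$ collapses the nested radical to $\sqrt{\norm{\ng_t}^2+G^2(T-t)}$ and yields \eqref{eq:opt_strat_orthogonal}; the same expression is visibly $\nabla V_t(\ng_t)$, since $V_t(\ng)=f(\sqrt{\norm{\ng}^2+G^2(T-t)})$ and $\nabla\sqrt{\norm{\ng}^2+C}=\ng/\sqrt{\norm{\ng}^2+C}$. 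The one genuinely delicate point is the upper bound: reducing the sphere maximization to the scalar problem in $c$ via symmetry, and recognizing that $f''\le f'/x$ is precisely the hypothesis making $\psi$ concave so that the critical point $c=0$ is the global maximizer; the remaining items (inheritance of structure by $f_{t+1}$, the telescoping identity, and the chain-rule simplification) are routine.
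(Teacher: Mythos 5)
Your proposal is correct and follows essentially the same route as the paper: a backward induction that uses \eqref{eq:strong} as a black box, a verification that the curvature condition $f''\le f'/x$ is inherited by each $f_t$, and a one-round analysis (the paper's Lemma~\ref{lem:mm:three}) with the same candidate $w^\star$, the same orthogonal-play lower bound, and the same use of the curvature hypothesis to show the adversary's objective is maximized at the orthogonal direction. Your angle parametrization $c=\cos\phi$ and concavity of $\psi$ is just a change of variables from the paper's $\beta$-parametrization and its monotone-derivative argument, so the two proofs coincide in substance.
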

In this case, the minimax optimal strategy
\eqref{eq:opt_strat_orthogonal} is equivalent to the FTRL strategy in
\eqref{eq:ftrl} with the time varying regularizer $V^*_{t}(w)$.
The key lemma needed for the proof is the following:
\begin{lemma}\label{lem:mm:three}
  Consider the game of \eqref{eq:oneroundH}.  Then, if $d > 1$, $h$
  is twice differentiable, and $h''(x)\leq \frac{h'(x)}{x}$ for $x >
  0$, we have:
\begin{align*}
H =  h\left(\sqrt{\|\theta\|^2 + G^2}\right)
\qqand
w^* = \frac{\ng}{\sqrt{\|\theta\|^2 + G^2}} h'\left(\sqrt{\|\theta\|^2 + G^2}\right)~.
\end{align*}
Any $g^*$  such that $\ang{\ng,g^*}=0$ and $\norm{g^*}=G$ is a minimax play for the adversary.
\end{lemma}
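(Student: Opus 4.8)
The plan is to bracket $H$ between matching bounds around the value $h(s)$ with $s := \sqrt{\norm{\theta}^2+G^2}$: an upper bound obtained by evaluating $\max_{g\in\G} H(w^*,g)$ for the candidate $w^* = \tfrac{h'(s)}{s}\theta$, and a lower bound $\min_w\max_{g\in\G}H(w,g)\ge h(s)$ obtained from a mean-zero mixed strategy for the adversary. The first thing I would observe is that, for this $w^*$, the payoff $H(w^*,g)=\ang{w^*,g}+h(\norm{\theta-g})$ depends on $g$ only through $a:=\ang{\hat\theta,g}$ and $\rho:=\norm{g}$, since $\ang{w^*,g}=\tfrac{h'(s)}{s}\norm{\theta}\,a$ and $\norm{\theta-g}^2=\norm{\theta}^2-2\norm{\theta}a+\rho^2$. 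Thus the adversary's maximization over the ball reduces to a problem over $\{(a,\rho):|a|\le\rho\le G\}$; since $h$ is increasing on $[0,\infty)$ and the quantity under the root stays $\ge(\norm{\theta}-a)^2\ge 0$, the maximizer takes $\rho=G$. That $d>1$ is exactly what guarantees vectors realizing a prescribed $(a,\rho)$ exist.

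\textbf{Upper bound.} It then remains to maximize $\phi(a):=\tfrac{h'(s)\norm{\theta}}{s}a+h(r(a))$ over $a\in[-G,G]$, where $r(a):=\sqrt{\norm{\theta}^2-2\norm{\theta}a+G^2}$, so $r(0)=s$. Differentiating (using $rr'=-\norm{\theta}$) gives $\phi'(a)=\norm{\theta}\big(\psi(s)-\psi(r(a))\big)$ with $\psi(x):=h'(x)/x$, and the hypothesis $h''(x)\le h'(x)/x$ for $x>0$ is precisely $\psi'(x)\le 0$. Since $r(a)<s$ for $a>0$ and $r(a)>s$ for $a<0$, monotonicity of $\psi$ forces $\phi$ to be nondecreasing on $[-G,0]$ and nonincreasing on $[0,G]$, hence $\max\phi=\phi(0)=h(s)$, attained exactly when $a=0$ and $\rho=G$, i.e.\ at any $g$ with $\ang{\theta,g}=0$ and $\norm{g}=G$. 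This yields $H\le\max_{g\in\G}H(w^*,g)=h(s)$ and identifies the adversary's best responses to $w^*$.

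\textbf{Lower bound and conclusion.} For the matching inequality I would check that for every $w$ there is an adversary distribution with mean $\vec{0}$ whose expected payoff is $h(s)$, so that the linear term $\ang{w,g}$ averages away and $\max_{g\in\G}H(w,g)\ge h(s)$: fixing a unit $u\perp\theta$ (available because $d>1$) and randomizing equally between $Gu$ and $-Gu$ does it, since $\norm{\theta\mp Gu}^2=\norm{\theta}^2+G^2=s^2$. Hence $H=h(s)$; and because $\max_{g\in\G}H(w^*,g)=h(s)=H$, the stated $w^*$ attains the outer minimum and the orthogonal norm-$G$ plays found above are exactly its best responses, which is the last claim of the lemma.

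\textbf{Main obstacle.} The crux — and the only place the curvature hypothesis enters — is the claim that, against $w^*$, the adversary can do no better than playing orthogonally to $\theta$; this is exactly the sign analysis of $\phi'$ through $\psi(x)=h'(x)/x$ being nonincreasing, and it breaks down without $h''\le h'/x$. Two lesser points to handle with care are the reduction of the $d$-dimensional maximization to the scalar $a$ (and the twin uses of $d>1$, both there and in building the mean-zero adversary), and the degenerate case $\theta=\vec{0}$, in which everything collapses directly to $H=h(G)$.
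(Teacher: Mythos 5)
Your proof is correct and takes essentially the same route as the paper's: the upper bound comes from evaluating the best response to the candidate $w^*$ parallel to $\theta$ and showing, via the monotonicity of $\psi(x)=h'(x)/x$ (your $\phi'(a)=\norm{\theta}\,(\psi(s)-\psi(r(a)))$ is exactly the derivative the paper computes in its Eq.~\eqref{eq:deriv_f}), that the adversary's optimum is at $a=0$ with $\norm{g}=G$, while the lower bound comes from the adversary playing orthogonally to $\theta$ with norm $G$. The only cosmetic difference is in the lower bound, where you average a mean-zero mixture over $\pm Gu$ whereas the paper deterministically sets $\beta=0$ in its reduced problem (choosing the orthogonal direction so the linear term is nonnegative); both rest on the same observation, and your explicit handling of $\theta=\vec{0}$ and of the reduction to the $(a,\rho)$ variables matches the paper's preliminary simplification.
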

We defer the proofs to the Appendix (of the proofs in the appendix, the proof of Lemmas ~\ref{lem:mm:three} and \ref{lem:mm:four} are perhaps the most important and instructive).
Since the best response of the adversary is
always to play a $g^*$ orthogonal to $\ng$, we call this the case of
the orthogonal adversary.

\subsection{The case of the parallel adversary, and Normal approximations}\label{sec:parallel_adv}
We analyze a second case where \eqref{eq:oneroundH} has closed-form solution,
and hence derive a class of games where we can cleanly state the value
of the game and the minimax optimal strategy.  The results
of~\cite{mcmahan13minimax} can be viewed as a special case of the
results in this section.

First, we introduce some notation.  We write $\tau \equiv T - t$ when
$T$ and $t$ are clear from context.  We write $r \sim \{-1, 1\}$ to
indicate $r$ is a Rademacher random variable, and $r_\tau \sim \{-1,
1\}^\tau$ to indicate $r_\tau$ is the sum of $\tau$ IID Rademacher
random variables.  Let $\sigma = \sqrt{\pi/2}$.  We write $\phi$ for a
random variable with distribution $N(0, \sigma^2)$, and similarly
define $\phi_\tau \sim N(0, (T - t) \sigma^2)$.  Then, define
\begin{equation} \label{eq:def_ht}
  f_t(x) = \E_{r_\tau \sim \Rad^\tau}\left[
      f\left(\abs{x + r_\tau G}\right)\right]
  \qqand
  \hf_t(x) = \E_{\phi_\tau \sim N(0, \tau \sigma^2)}\left[
     f\left(\abs{x + \phi_\tau G\right})\right],
\end{equation}
and note $B(\ng) = f_T(\norm{\ng}) = \hf_T(\norm{\ng})$ since $\phi_0$
and $r_0$ are always zero.  
These functions are exactly \emph{smoothed} version of the
function $f$ used to define $B$. With these definitions, we can now state:
\begin{theorem}\label{thm:normapprox}
  Let $B(\ng)=f(\norm{\ng})$ for an increasing convex function $f:[0,\infty]
  \rightarrow \R$, and let the adversary play from $\G=\{g: \|g\| \leq
  G\}$.  Assume $f_t$ and $\hf_t$ as in \eqref{eq:def_ht} for all $t$.
  If all the $f_t$ satisfy
  \begin{equation}\label{eq:strong2}
    \min_{w} \max_{g \in \G} \ \langle w, g \rangle + f_{t+1}(\norm{\theta - g}) 
     = \E_{r \sim \{-1, 1\}} \big[f_{t+1}\left(\|\theta\| + r G\right)\big],
  \end{equation}
  then $V_t(\ng) = f_t(\norm{\ng})$ is exactly the conditional
  value of the game, and \eqref{eq:mmq} gives the minimax optimal strategy:
  \begin{equation}
  \label{eq:opt_strat_orthogonal}
  w_{t+1}=\hat{\ng}\frac{f_{t+1}\left(\|\ng\| + G\right)-f_{t+1}\left(\|\ng\| - G\right)}{2G}~.
  \end{equation}
  Similarly, suppose the $\hf_t$ satisfy the equality \eqref{eq:strong2} (with $\hf_t$ replacing $f_t$).  
  Then $\q_t(\ng) =
  \hf_t(\norm{\ng})$ is an admissible relaxation of $V_t$, satisfying
  \eqref{eq:repst} with $\approxeps_{t} = 0$, using $w_{t+1}$ based on
  \eqref{eq:mmq}.
  Further, a sufficient condition for \eqref{eq:strong2} is that $d=1$, or $d>1$, the $f_t$
  (or $\hf_t$, respectively) are twice differentiable, and satisfy and
  $f_t''(x) \geq f_t'(x)/x$ for all $x > 0$.
\end{theorem}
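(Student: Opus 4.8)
The backbone is a backward induction on $t$ establishing $V_t(\ng)=f_t(\norm{\ng})$, together with an essentially parallel argument for the relaxation $\q_t(\ng)=\hf_t(\norm{\ng})$; the genuinely new ingredients are (a)~a convolution identity that turns $f_{t+1}$ smoothed by one Rademacher step into $f_t$ (and the corresponding \emph{inequality} for $\hf$), (b)~reading off the minimizing $w_{t+1}$ from the one-round game, and (c)~the sufficient condition for \eqref{eq:strong2}, which is a statement purely about the one-round game \eqref{eq:oneroundH}.

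For the exact-value claim, the base case $t=T$ is immediate since $r_0\equiv 0$ gives $f_T(\norm{\ng})=f(\norm{\ng})=B(\ng)=V_T(\ng)$. Assuming $V_{t+1}=f_{t+1}(\norm{\cdot})$, unrolling the definition of $V_t$ and applying \eqref{eq:strong2} gives
\[
 V_t(\ng)=\min_w\max_{g\in\G}\ \langle g,w\rangle+f_{t+1}(\norm{\ng-g})=\E_{r\sim\Rad}\big[f_{t+1}(\norm{\ng}+rG)\big],
\]
and since the sum of one Rademacher and an independent sum of $\tau-1$ Rademachers is distributed as a sum of $\tau$ Rademachers, a tower-of-expectations computation rewrites the right-hand side as $f_t(\norm{\ng})$ via \eqref{eq:def_ht}. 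The optimal strategy comes from \eqref{eq:opt_strat_from_cond_value}--\eqref{eq:mmq}: $w_{t+1}$ minimizes $\max_{g\in\G}\langle g,w\rangle+f_{t+1}(\norm{\ng_t-g})$, which is precisely the one-round game \eqref{eq:oneroundH} with $h=f_{t+1}$ and $\theta=\ng_t$; Lemma~\ref{lem:mm:four} (the parallel-adversary one-round lemma proved in the Appendix, precisely under the hypotheses listed in the last sentence of the theorem) supplies both the value and the minimizer $w^{*}=\hat{\ng}\,\tfrac{f_{t+1}(\norm{\ng}+G)-f_{t+1}(\norm{\ng}-G)}{2G}$, which is \eqref{eq:opt_strat_orthogonal}.

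For the relaxation claim, \eqref{eq:hVb} holds since $\hf_T(\norm{\ng})=f(\norm{\ng})=B(\ng)$. For \eqref{eq:repst} with $\approxeps_{t+1}=0$: by the choice of $w_{t+1}$ via \eqref{eq:mmq} and the assumed equality \eqref{eq:strong2} (with $\hf$), $\max_{g\in\G}[\langle g,w_{t+1}\rangle+\hf_{t+1}(\norm{\ng-g})]=\E_{r\sim\Rad}[\hf_{t+1}(\norm{\ng}+rG)]$, so it remains to prove $\hf_t(\norm{\ng})\ge\E_{r\sim\Rad}[\hf_{t+1}(\norm{\ng}+rG)]$. Writing $N(0,\tau\sigma^2)$ as the convolution of $N(0,(\tau-1)\sigma^2)$ with $N(0,\sigma^2)$ and conditioning on the first factor, this reduces to the one-step inequality $\E_{Z}[\psi(Z)]\ge\h\big(\psi(G)+\psi(-G)\big)$ for every convex $\psi$, with $Z\sim N(0,\sigma^2 G^2)$ (here $\psi(y)=f(\abs{c+y})$ is convex because $f$ is convex and increasing). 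Equivalently, the law of a $G$-scaled Rademacher is below $N(0,\sigma^2G^2)$ in the convex order; I would verify this via the stop-loss condition $\E[(Z-a)_+]\ge\E[(GR-a)_+]$ for all $a$, noting that $a\mapsto\E[(Z-a)_+]$ is convex and, because $\sigma=\sqrt{\pi/2}$ forces $\E\abs{Z}=G$, it matches the piecewise-linear right-hand side in both value and slope at $a=0$ (the cases $\abs{a}\ge G$ being immediate). The formula for $w_{t+1}$ and that it yields $\approxeps_{t+1}=0$ then follow exactly as above, now with $h=\hf_{t+1}$ in Lemma~\ref{lem:mm:four}.

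It remains to prove the sufficient condition for \eqref{eq:strong2}, which lives inside Lemma~\ref{lem:mm:four}. When $d=1$ we may take the parameter $\theta\ge 0$ by evenness of $h$; the adversary's objective $\langle w,g\rangle+h(\abs{\theta-g})$ is convex in $g\in[-G,G]$, hence maximized at $g=\pm G$, and minimizing $\max\{wG+h(\theta-G),\,-wG+h(\theta+G)\}$ over the scalar $w$ by equalizing the branches gives $w^{*}=\tfrac{h(\theta+G)-h(\theta-G)}{2G}$ and value $\h\big(h(\theta+G)+h(\theta-G)\big)=\E_{r\sim\Rad}[h(\theta+rG)]$. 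For $d>1$ the role of the hypothesis $h''(x)\ge h'(x)/x$ --- equivalently, convexity of $x\mapsto h(\sqrt{x})$ --- is to force the adversary's best response to lie along $\pm\hat{\ng}$, reducing the $d$-dimensional game to the one-dimensional one just solved. I expect this parallelism step, the mirror image of the orthogonality conclusion of Lemma~\ref{lem:mm:three}, to be the main obstacle: it requires a first-order/KKT analysis decomposing $g$ into its components along and orthogonal to $\ng$ and showing the orthogonal component vanishes at the saddle point, exactly as the analogous argument is the crux of Lemma~\ref{lem:mm:three}. (Lemma~\ref{lemma:fenchel_norm} is not needed here; it enters later when these value and strategy statements are converted into regret bounds via Theorem~\ref{thm:rrdual}.)
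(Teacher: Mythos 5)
Your proposal is correct and follows essentially the same route as the paper: backward induction on $V_t(\ng)=f_t(\norm{\ng})$ using \eqref{eq:strong2} plus a tower-of-expectations step, the same reduction of the $\hf_t$ admissibility claim to the one-step Gaussian-dominates-Rademacher inequality (the paper's Lemma~\ref{lemma:normap}, which it proves by tangent-line bounds at $\pm1$ rather than your equivalent convex-order/stop-loss check), and Lemma~\ref{lem:mm:four} for the strategy and the sufficient condition, whose $d>1$ case the paper indeed handles by exactly the parallel/orthogonal decomposition of $g$ you anticipate, showing convexity in the parallel coordinate $\beta$ pushes the maximizer to $\beta=\pm G$.
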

Contrary to the case of the orthogonal adversary, the strategy in
\eqref{eq:opt_strat_orthogonal} \emph{cannot} easily be interpreted as an
FTRL algorithm.
The proof is based on two lemmas. The first provides the key tool in supporting the Normal relaxation:
\begin{lemma}\label{lemma:normap}
  Let $f: \R \rightarrow \R$ be a convex function and $\sigma^2=\pi/2$.  Then,
  \[
  \E_{g \sim \{-1, 1\}} [f(g)] \leq \E_{\phi \sim N(0, \sigma^2)}[f(\phi)]~.
  \]
\end{lemma}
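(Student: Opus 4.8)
The plan is to construct a coupling of a Rademacher variable $g\sim\{-1,1\}$ with $\phi\sim N(0,\sigma^2)$ under which $\phi$ is, conditionally on $g$, an unbiased estimate of $g$, and then invoke Jensen's inequality. Since a finite convex $f$ on $\R$ lies above some affine function, $\E[f(\phi)]$ is well defined in $(-\infty,+\infty]$ and the claim is trivial if it equals $+\infty$, so I may assume it is finite. If I can produce $(g,\phi)$ on a common probability space with $g$ Rademacher and $\E[\phi\mid g]=g$ almost surely, then the conditional Jensen inequality gives
\[
\E[f(\phi)] \;=\; \E\big[\E[f(\phi)\mid g]\big] \;\ge\; \E\big[f(\E[\phi\mid g])\big] \;=\; \E[f(g)] \;=\; \tfrac12\big(f(1)+f(-1)\big),
\]
which is exactly $\E_{g\sim\{-1,1\}}[f(g)]$.

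For the coupling I would take $g:=\sign(\phi)$, breaking the probability-zero tie at $\phi=0$ arbitrarily. Symmetry of the centered Gaussian gives $\Pr(g=1)=\Pr(g=-1)=\tfrac12$, so $g$ is Rademacher, and $\E[\phi\mid g=1]=\E[\phi\mid\phi\ge0]=\E|\phi|$, with the case $g=-1$ symmetric. The one computation that matters is the Gaussian first absolute moment $\E|\phi|=\sigma\sqrt{2/\pi}$ for $\phi\sim N(0,\sigma^2)$, and the hypothesis $\sigma^2=\pi/2$ is exactly what makes this equal to $1$; hence $\E[\phi\mid g]=g$ and the argument closes.

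An equivalent route that sidesteps conditional expectations altogether: pick subgradients $s_+\in\partial f(1)$ and $s_-\in\partial f(-1)$, so that $f(x)\ge f(1)+s_+(x-1)$ and $f(x)\ge f(-1)+s_-(x+1)$ for all $x$. Splitting $\E[f(\phi)]$ across $\{\phi\ge0\}$ and $\{\phi<0\}$, lower-bounding by the corresponding tangent line on each piece, and using $\E[\phi\,\mathbf{1}\{\phi\ge0\}]=\tfrac12\E|\phi|=\tfrac12$ (again by $\sigma^2=\pi/2$), the linear correction terms cancel and $\E[f(\phi)]\ge\tfrac12(f(1)+f(-1))$ drops out.

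There is no genuine obstacle here: the proof rests entirely on the Gaussian first-absolute-moment identity together with the calibration $\sigma^2=\pi/2$. The only points deserving a word of care are handling the case $\E[f(\phi)]=+\infty$ and justifying the expectation/tangent-line manipulations, both immediate from the affine lower bound enjoyed by any finite convex $f$ on $\R$.
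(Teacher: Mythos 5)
Your proposal is correct, and its second route (tangent lines at $\pm 1$, splitting the expectation over $\{\phi>0\}$ and $\{\phi<0\}$, and observing that the linear correction terms vanish because $\E[(\phi-1)\mathbf{1}\{\phi>0\}]=\E[(\phi+1)\mathbf{1}\{\phi<0\}]=0$ under $\sigma^2=\pi/2$) is exactly the paper's proof. The coupling-plus-conditional-Jensen route is just a repackaging of the same computation (the key identity $\E[\phi\mid\phi>0]=\E|\phi|=1$ is the same moment fact), though it is a nice way to see the lemma as a convex-order domination statement.
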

\begin{proof}
  First observe that $\E[(\phi - 1) \mathbf{1}\{\phi > 0\}] = 0$ and $\E[(\phi + 1) \mathbf{1}\{\phi < 0\}] = 0$ by our choice of $\sigma$.  We will use two lower bounds on the function $f$, which follow from convexity:
  \begin{equation*}
    f(x) \geq f(1) + f'(1)(x - 1)   \qqand
    f(x) \geq  f(-1) + f'(-1)(x + 1)~.
  \end{equation*}
  Writing out the value of $\E[f(\phi)]$ explicitly we have
  \begin{eqnarray*}
    \E[f(\phi)]
    & = & \E[f(\phi) \mathbf{1}\{\phi < 0\}]
      + \E[f(\phi) \mathbf{1}\{\phi > 0\}] \\
    & \geq & \E[(f(-1) + f'(-1)(\phi + 1)) \mathbf{1}\{\phi < 0\}]
      + \E[(f(1) + f'(1)(\phi - 1)) \mathbf{1}\{\phi > 0\}] \\
    & = & \frac{f(-1) + f(1)}{2} + f'(-1)\E[(\phi + 1)\mathbf{1}\{\phi < 0\}]
     + f'(1)\E[(\phi - 1) \mathbf{1}\{\phi < 0\}]~.
  \end{eqnarray*}
  The latter two terms vanish, giving the stated inequality.
\end{proof}

The second lemma is used to prove the sufficient condition by solving
the one-round game; again, the proof is deferred to the Appendix.
Note that functions of the form $h(x)=g(x^2)$, with $g$ convex always satisfies the conditions of the following Lemma.
\begin{lemma}\label{lem:mm:four}
  Consider the game of \eqref{eq:oneroundH}.  Then, if $d = 1$, or if $d >
  1$, $h$ is twice differentiable, and $h''(x) > \frac{h'(x)}{x}$ for
  $x > 0$, then
  \begin{align*}
    H =  \frac{h\left(\|\ng\| + G\right)+h\left(\|\ng\| - G\right)}{2}
    \qqand
    w^* = \hat{\ng}\frac{h\left(\|\ng\| + G\right)-h\left(\|\ng\| - G\right)}{2G}.
\end{align*}
Any $g^*$ that satisfies $|\langle \ng,g^* \rangle|=G \norm{\ng}$ and
$\norm{g^*}=$G is a minimax play for the adversary.
\end{lemma}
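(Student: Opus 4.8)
The plan is to sandwich the value $H$ of the one-round game \eqref{eq:oneroundH} between a lower bound coming from a zero-mean randomized adversary and an upper bound coming from having the player commit to $w^*$, and then check that both bounds equal $\tfrac12\big(h(\norm{\theta}+G)+h(\norm{\theta}-G)\big)$. Throughout I abbreviate $a=\norm{\theta}$ and $c=\frac{h(a+G)-h(a-G)}{2G}$, so the candidate play is $w^*=c\,\hat{\theta}$.

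For the lower bound, I would take $p$ to be the distribution putting mass $\tfrac12$ on each of $G\hat{\theta}$ and $-G\hat{\theta}$ (on $\pm Ge$ for an arbitrary unit vector $e$ if $\theta=0$). Since $\E_{g\sim p}[g]=0$ we have $\E_{g\sim p}[\ang{w,g}]=0$ for \emph{every} $w$, so $\max_{g\in\G}H(w,g)\ge\E_{g\sim p}[H(w,g)]=\E_{g\sim p}\big[h(\norm{\theta-g})\big]=\tfrac12\big(h(a+G)+h(|a-G|)\big)$, which equals $\tfrac12\big(h(a+G)+h(a-G)\big)$ because $h$ is even. Minimizing over $w$ yields $H\ge\tfrac12\big(h(a+G)+h(a-G)\big)$.

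For the matching upper bound I would evaluate $\max_{g\in\G}H(w^*,g)$. Because $w^*$ is a scalar multiple of $\theta$, the inner product $\ang{w^*,g}$ depends on $g$ only through $\alpha\equiv\ang{\hat{\theta},g}$, while $h(\norm{\theta-g})$ only increases when we enlarge the component of $g$ orthogonal to $\theta$ (using that $h$ is even and increasing on $[0,\infty)$). Hence for $d>1$ the adversary optimally spends all leftover norm budget orthogonally, and the game collapses to maximizing, over $\alpha\in[-G,G]$, the scalar function $\psi(\alpha)=c\alpha+h\big(\sqrt{a^2-2a\alpha+G^2}\big)$; for $d=1$ there is no orthogonal direction and it collapses to maximizing $\psi(\alpha)=c\alpha+h(a-\alpha)$ on the same interval. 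A one-line calculation using the definition of $c$ shows $\psi(-G)=\psi(G)=\tfrac12\big(h(a+G)+h(a-G)\big)$, so it remains only to prove $\psi$ is convex on $[-G,G]$, in which case its maximum is attained at an endpoint and equals this value.

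Convexity is immediate when $d=1$, since there $\psi$ is a linear term plus $h$ composed with an affine map. When $d>1$, set $u=a^2-2a\alpha+G^2$, which is affine in $\alpha$ and nonnegative on $[-G,G]$; it then suffices that $k(u)\defeq h(\sqrt{u})$ be convex, and substituting $x=\sqrt{u}$ and differentiating twice gives $k''(u)=\frac{x\,h''(x)-h'(x)}{4x^{3}}$, which is $\ge 0$ precisely under the hypothesis $h''(x)\ge h'(x)/x$. Combining the two bounds gives $H=\tfrac12\big(h(a+G)+h(a-G)\big)$ and certifies $w^*$ as a minimax play; finally, by Cauchy--Schwarz the $g$ with $|\ang{\theta,g}|=G\norm{\theta}$ and $\norm{g}=G$ are exactly $\pm G\hat{\theta}$, and these attain $\max_{g\in\G}H(w^*,g)$ by the endpoint computation, so they are minimax plays for the adversary. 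I expect the only non-routine step to be recognizing this reduction --- that parallelism of $w^*$ turns the adversary's problem into a one-dimensional convex program whose convexity is governed exactly by the curvature condition $h''(x)\ge h'(x)/x$ --- with everything else being bookkeeping (including the $\theta=0$ degenerate case, which is checked directly).
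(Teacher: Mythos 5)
Your proof is correct, and it reaches the same crux as the paper --- for a player move parallel to $\theta$, the adversary's problem reduces to maximizing $\psi(\alpha)=c\alpha+h\big(\sqrt{\norm{\theta}^2-2\norm{\theta}\alpha+G^2}\big)$ over $\alpha\in[-G,G]$, and the condition $h''(x)\ge h'(x)/x$ is exactly convexity of $u\mapsto h(\sqrt{u})$, pushing the maximum to the endpoints $\alpha=\pm G$; your $k''(u)=\frac{xh''(x)-h'(x)}{4x^3}$ is the same second-derivative computation the paper performs on the argument of its reduced problem \eqref{eq:simple_minmax}. Where you genuinely differ is in how the two sides of the minimax identity are certified. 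The paper first argues (somewhat informally, ``by inspection'' and ``it is easy to see'') that both $w$ and $g$ may be taken in the span of $\theta$ and a single orthogonal direction, reduces the whole game to $\min_\alpha\max_\beta$, and then finds the saddle point by equating the two boundary values. You instead sandwich: the lower bound comes from a zero-mean randomized adversary supported on $\pm G\hat{\theta}$, which kills $\ang{w,g}$ in expectation for \emph{every} $w$ and so requires no structural claim about the optimal player move; the upper bound comes from committing to the explicit $w^*$ and solving only the adversary's (now genuinely one-dimensional) best-response problem. This buys you a cleaner justification of the step the paper leaves to inspection, at the cost of having to guess $w^*$ in advance rather than deriving it; it is also consonant with the paper's own use of mean-zero adversary distributions in Section~\ref{sec:newanalysis}. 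Your endpoint check $\psi(G)=\psi(-G)=\frac{h(a+G)+h(a-G)}{2}$, the $d=1$ specialization, the $\theta=0$ degenerate case, and the identification of the adversary's optimal plays via Cauchy--Schwarz are all handled correctly.
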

The adversary can always play $g^* = G\frac{\ng}{\norm{\ng}}$ when
$\ng \neq \mathbf{0}$, and so we describe this as the case of the
parallel adversary.  In fact, inductively this means that \emph{all}
the adversary's plays $g_t$ can be on the same line, providing
intuition for the fact that this lemma also applies in the
1-dimensional case.

Theorem~\ref{thm:normapprox} provides a recipe to produce suitable relaxations $\q_t$ which may, in certain cases, exhibit nice closed form solutions. The interpretation here is that a ``Gaussian adversary'' is stronger than one playing from the set $[-1,1]$ which leads to IID Rademacher behavior, and this allows us to generate such potential functions via Gaussian smoothing. In this view,
note that our choice of $\sigma^2$ gives $\E_\phi[\abs{\phi}] = 1$.

\section{A Power Family of Minimax Algorithms}\label{sec:palgs}
We analyze a family of algorithms based on potentials $B(\ng) =
f(\norm{\ng})$ where $f(x) = \frac{W}{p} \abs{x}^p$ for parameters $W > 0$ and
$p \in [1, 2]$, when the dimension is at least two.  This is reminiscent
of $p$-norm algorithms~\citep{Gentile03}, but the connection is superficial---the norm
we use to measure $\ng$ is always the norm of our Hilbert space.  Our
main result is:
\begin{corollary}\label{cor:palg}
  Let $d > 1$ and $W>0$, and let $f$ and $B$ be defined as above.  Define $f_t(x) =
  \frac{W}{p}\big(x^2 + (T-t)G\big)^{p/2}$.  Then, $f_t(\norm{\ng})$
  is the conditional value of the game, and the optimal strategy is as
  in Theorem~\ref{thm:exact_minmax}.  If $p \in (1, 2]$, letting $q
  \geq 2$ such that $1/p + 1/q = 1$, we have a bound
  \[
  \Regret(u)\
      \ \le \ \frac{1}{W^{q-1} q}\norm{u}^q + \frac{W}{p}\big(G \sqrt{T}\big)^p
      \ \le \ \left(\tfrac{1}{p} + \tfrac{1}{q}\norm{u}^q\right)G\sqrt{T},
   \]
   where the second inequality comes by taking
   $W = (G\sqrt{T})^{1-p}$.
   For all $u$, the bound $\big(\tfrac{1}{p} +
     \tfrac{1}{q}\norm{u}^q\big)G\sqrt{T}$ is minimized by taking
   $p=2$.
   For $p=1$, we have
   \[
   \forall u : \norm{u} \leq W, \quad \Regret(u) \leq W G \sqrt{T}~.
   \]
\end{corollary}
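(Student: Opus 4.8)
The plan is to deduce the conditional value and minimax strategy from Theorem~\ref{thm:exact_minmax}, read off the value of the game, and then convert it to a regret bound via Lemma~\ref{lemma:fenchel_norm} and \eqref{eq:mmrg}; the remaining content is two elementary calculus facts. First I would verify the curvature hypothesis of Theorem~\ref{thm:exact_minmax} (we are given $d>1$). For $f(x)=\frac{W}{p}|x|^p$, $p\in[1,2]$, on $x>0$ we have $f'(x)=Wx^{p-1}$ and $f''(x)=W(p-1)x^{p-2}$, hence $f''(x)=(p-1)f'(x)/x\le f'(x)/x$, and $f$ is increasing and convex on $[0,\infty)$. Equivalently $f(x)=g(x^2)$ with $g(u)=\frac{W}{p}u^{p/2}$ concave, a form preserved by the smoothing since $f_t(x)=f(\sqrt{x^2+G^2(T-t)})=g(x^2+G^2(T-t))$ is again a concave function of $x^2$, so every $f_t$ likewise satisfies $f_t''(x)\le f_t'(x)/x$. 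Thus Theorem~\ref{thm:exact_minmax} applies: $V_t(\ng)=f_t(\norm{\ng})$ is the conditional value of the game, the minimax play is the FTRL update of Theorem~\ref{thm:exact_minmax} with regularizer $V_t^*$, and the value of the game is $V=f(G\sqrt{T})=\frac{W}{p}(G\sqrt{T})^p$.

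Next I would convert this to regret. By \eqref{eq:mmrg}, $\Regret(u)\le B^*(u)+V$, and by Lemma~\ref{lemma:fenchel_norm}, $B^*(u)=f^*(\norm{u})$. For $p\in(1,2]$ the scalar conjugate of $f(x)=\frac{W}{p}|x|^p$ is the standard $f^*(y)=\frac{1}{W^{q-1}q}|y|^q$ with $1/p+1/q=1$ (maximize $xy-\frac{W}{p}x^p$ at $x=(y/W)^{1/(p-1)}$ and simplify using $1/(p-1)=q-1$ and $p/(p-1)=q$), so $\Regret(u)\le\frac{1}{W^{q-1}q}\norm{u}^q+\frac{W}{p}(G\sqrt{T})^p$, which is the first displayed inequality. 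Setting $W=(G\sqrt{T})^{1-p}$, the identity $(1-p)(q-1)=-1$ gives $W^{q-1}=(G\sqrt{T})^{-1}$, so the first term equals $\frac{1}{q}\norm{u}^qG\sqrt{T}$, while the second equals $\frac{1}{p}(G\sqrt{T})^{1-p}(G\sqrt{T})^p=\frac{1}{p}G\sqrt{T}$; adding yields $\big(\tfrac{1}{p}+\tfrac{1}{q}\norm{u}^q\big)G\sqrt{T}$.

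For the claim that $p=2$ minimizes this bound for every $u$, reparametrize by $q=p/(p-1)\in[2,\infty)$. Since $1/p=1-1/q$, the bound is $\big(1+\tfrac{1}{q}(U^q-1)\big)G\sqrt{T}$ with $U=\norm{u}$, so it suffices to show $\psi(q):=(U^q-1)/q$ is non-decreasing on $[2,\infty)$. Writing $U^q-1=\int_0^q U^s\ln U\,ds=(\ln U)\int_0^q U^s\,ds$ gives $\psi(q)=(\ln U)\cdot\frac{1}{q}\int_0^q U^s\,ds$, i.e.\ $\ln U$ times the average of $U^s$ over $[0,q]$: for $U\ge1$ this average is non-decreasing in $q$ and $\ln U\ge0$, for $U\le1$ it is non-increasing and $\ln U\le0$, and for $U=1$ it is the constant $0$; in every case $\psi$ is non-decreasing, so the minimum is attained at $q=2$, i.e.\ $p=2$.

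The $p=1$ case is the same argument with a degenerate conjugate: $f(x)=W|x|$ satisfies $f''=0\le f'/x=W/x$ on $x>0$, so $V=f(G\sqrt{T})=WG\sqrt{T}$ as before, while $f^*(y)=0$ for $|y|\le W$ and $+\infty$ otherwise, so Lemma~\ref{lemma:fenchel_norm} gives $B^*(u)=0$ for $\norm{u}\le W$ (here $B$ is just the support function of the radius-$W$ ball); \eqref{eq:mmrg} then yields $\Regret(u)\le WG\sqrt{T}$ for all $u$ with $\norm{u}\le W$. I do not expect a genuine obstacle: once Theorem~\ref{thm:exact_minmax} is invoked the rest is bookkeeping, and the only points warranting care are checking that the smoothed potentials $f_t$ (not merely $f$ itself) obey the curvature condition for every $t$, and the short monotonicity argument showing that the single choice $p=2$ is simultaneously optimal for all $\norm{u}$.
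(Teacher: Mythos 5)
Your proposal is correct and follows essentially the same route as the paper: verify $f''(x)\le f'(x)/x$, invoke Theorem~\ref{thm:exact_minmax}, bound $\Regret(u)\le f^*(\norm{u})+f(G\sqrt{T})$ via the Fenchel conjugate, and then do the calculus for the choice of $W$ and the optimality of $p=2$. The only (harmless) differences are that the paper routes the regret bound through Corollary~\ref{cor:admbench} rather than \eqref{eq:mmrg}, and establishes the $p=2$ claim by a sign computation on $\frac{d}{dp}\big(\tfrac1p+\tfrac1q\norm{u}^q\big)$ where you instead give a slightly more elementary integral-average monotonicity argument in $q$.
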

\begin{proof}
  Let $f(x) = \frac{W}{p} \abs{x}^p$ for $p \in [1, 2]$, Then, $f''(x)
  \leq f'(x) / x$, in fact basic calculations show $
  \frac{f'(x)/x}{f''(x)} = \frac{1}{p-1} \geq 1 $ when $p \leq 2$.
  Hence, we can apply Theorem~\ref{thm:exact_minmax}, proving the
  claim on the $f_t$.  The regret bounds can then be derived from
  Corollary~\ref{cor:admbench}, which gives $ \Regret(u) \le f^*(u) +
  f(G \sqrt{T}), $ noting $f^*(u) = \frac{W}{q}\abs{\frac{u}{W}}^q$ when $p > 1$.
  The fact that $p=2$ is an optimal choice in the first bound follows
  from the fact that $\frac{d}{dp} \left(\tfrac{1}{p} +
    \tfrac{1}{q}\norm{u}^q\right) \le 0$ for $p \in (1, 2]$ with $q = \frac{p}{p-1}$.
\end{proof}

The $p=1$ case in fact exactly recaptures the result of
\citet{abernethy08} for linear functions, extending it also to spaces
of dimension equal to two.  The optimal update is
 $w_{t+1} = \grad f_t(\norm{\ng_t}) = W \ng_t/\sqrt{\norm{\ng_t}^2+G^2(T-t)}$.
 In addition to providing a regret bound
for the comparator set $\W = \{u : \norm{u} \leq W\}$, the algorithm
will in fact only play points from this set.

For $p=q=2$, writing $W = \eta$, we have
\[
\Regret(u) \le \frac{1}{2 \eta} \norm{u}^2 + \frac{\eta}{2} G^2 T,
\]
for any $u$.  In this case we see $W=\eta$ is behaving not like the radius
of a comparator set, but rather as a learning rate.  In fact, we have
$
w_{t+1} = \nabla V_t(\ng_t)
  = \eta \ng_t = -\eta g_{1:t},
$
and so we see this minimax-optimal algorithm is in fact
constant-step-size gradient descent.  Taking $\eta = \frac{1}{G
  \sqrt{T}}$ yields $\h (\norm{u}^2 + 1)G \sqrt{T}$.  This result
complements \citet[Thm. 7]{mcmahan13minimax}, which covers the $d=1$
case, or $d>1$ when the adversary plays from $\G = [-1, 1]^d$.

Comparing the $p=1$ and $p>1$ algorithms reveals an interesting fact.
For simplicity, take $G=1$.  Then, the $p=1$ algorithm with $W=1$ is
exactly the minimax optimal algorithm for minimizing regret against
comparators in the $L_2$ ball (for $d > 1$): the value of this game is
$\sqrt{T}$ and we can do no better (even by playing outside of the
comparator set).  However, picking $p > 1$ gives us algorithms that
\emph{will} play outside of the comparator set.  While they cannot do
better than $\sqrt{T}$, taking $G=1$ and $\norm{u}=1$ shows that
\emph{all} algorithms in this family in fact achieve $\Regret(u) \leq
\sqrt{T}$ when $\norm{u} \leq 1$, matching the exact minimax optimal
value. Further, the algorithms with $p > 1$ provide much stronger
guarantees, since they also give non-vacuous guarantees for $\norm{u}
> 1$, and tighter bounds when $\norm{u} < 1$.  This suggests that the
$p=2$ algorithm will be the most useful algorithm in practice,
something that indeed has been observed empirically (given the
prevalence of gradient descent in real applications).
This result also clearly demonstrates the value of studying
minimax-optimal algorithms for different choices of the benchmark $B$,
as this can produce algorithms that are no worse and in some cases
significantly better than minimax algorithms defined in terms of
regret minimization directly (i.e., via \eqr{stdregB}).

The key difference in these algorithms is not how they play against a
minimax optimal adversary for the regret game, but how they play
against non-worst-case adversaries.  In fact, a simple induction based
on Lemma~\ref{lem:mm:three} shows that any minimax-optimal adversary
will play so that $\sqrt{\norm{\ng_t}^2+G^2(T-t)} = G\sqrt{T}$.
Against such an adversary, the $p=1$ algorithm is identical to the
$p=2$ algorithm with learning rate $\eta = \frac{1}{G\sqrt{T}}$.  In
fact, using the choice of $W$ from Corollary~\ref{cor:palg}, all of
these algorithms play identically against a minimax adversary for the
regret game.

\section{Tight Bounds for Unconstrained Learning}
\label{sec:expg}
In this section we analyze algorithms based on benchmarks and
potentials of the form $\exp(\norm{\ng}^2/t)$, and show they lead to a
minimal dependence on $\norm{u}$ in the corresponding regret bounds
for a given upper bound on regret against the origin (equal to the loss of
the algorithm).

First, we derive a lower bound for the known $T$ game.
Using Lemma~\ref{lemma:bound_fenchel} in the Appendix, we can show
that the $B(\ng) = \exp(\norm{\ng}^2/T)$ benchmark approximately corresponds to a
regularizer of the form $\norm{u} \sqrt{T \log (\sqrt{T}
  \norm{u}+1)}$; there is actually some technical challenge here, as the conjugate
$B^*$ cannot be computed in closed form---the given regularizer is an
upper bound.  This kind of regularizer is particularly interesting
because it is related to parameter-free sub-gradient descent
algorithms~\citep{orabona13dimfree};  a similar potential function was
used for a parameter-free algorithm by~\citep{ChaudhuriFH09}.
The lower bound for this game was proven in \citet{streeter12unconstrained} for 1-dimensional spaces, and \citet{orabona13dimfree} extended it to Hilbert spaces and improved the leading constant. We report it here for completeness.
\begin{theorem}
\label{theo:lower}
Fix a non-trivial Hilbert space $\Hilbert$ and a specific online learning algorithm.
If the algorithm guarantees a zero regret against the competitor with zero norm, then there exists a sequence of $T$ cost vectors in $\Hilbert$, such that the regret against any other competitor is $\Omega(T)$.
On the other hand, if the algorithm guarantees a regret at most of $\epsilon>0$ against the competitor with zero norm, then, for any $0<\eta<1$, there exists a $T_0$ and a sequence of $T\geq T_0$ unitary norm vectors $g_t \in \Hilbert$, and a vector $u \in \Hilbert$ such that
\[
\Regret(u)  \geq (1-\eta) \|u\| \sqrt{\frac{1}{\log 2}}\sqrt{T \log \frac{\eta \|u\| \sqrt{T}}{3 \epsilon} } -2 ~.
\]
\end{theorem}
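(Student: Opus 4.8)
The two halves of Theorem~\ref{theo:lower} call for two unrelated arguments — a short deterministic one for the $\Omega(T)$ claim, and a randomized ``anti-concentration against bounded reward'' argument for the $\sqrt{T\log}$ claim — and both are essentially the constructions of \citet{streeter12unconstrained} and \citet{orabona13dimfree}, which I would reconstruct as follows.

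\textbf{The $\Omega(T)$ half.} Fix any comparator $u\neq 0$, set $e=-u/\norm{u}$, and run the player against the adaptive adversary that at round $t$ (after $w_t$ is committed) plays $g_t=e$ if $\langle e,w_t\rangle\ge 0$ and $g_t=-e$ otherwise, so $\langle g_t,w_t\rangle=|\langle e,w_t\rangle|\ge 0$ always. Then $\Reward=-\sum_t\langle g_t,w_t\rangle\le 0$ on this run, while the hypothesis of zero regret against $u=0$ is exactly $\Reward\ge 0$; hence $\langle e,w_t\rangle=0$ for every $t$, so the adversary in fact played $g_t=e$ throughout, i.e.\ the run coincides with play against $g_1=\dots=g_T=e$. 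For that sequence $g_{1:T}=Te$, so $\Regret(u)=\sum_t\langle e,w_t\rangle-\langle Te,u\rangle=0+T\norm{u}=\Omega(T)$.

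\textbf{The $\sqrt{T\log}$ half.} Take $g_t=\eps_t e$ with $e$ a fixed unit vector and $\eps_1,\dots,\eps_T$ i.i.d.\ Rademacher, so $\ng_T=(-\sum_t\eps_t)e$ and $\norm{\ng_T}=|\sum_{t=1}^T\eps_t|$. Two facts: (a) $w_t$ is fixed before $\eps_t$ is drawn, so $\E[\langle-g_t,w_t\rangle\mid\eps_1,\dots,\eps_{t-1}]=-\langle e,w_t\rangle\,\E[\eps_t]=0$ and thus $\E[\Reward]=0$ under the uniform law on $\Rad^T$; (b) the hypothesis $\Regret(0)\le\eps$ reads $\Reward\ge-\eps$, so $\Reward+\eps\ge 0$ has mean $\eps$ and Markov gives $\Pr[\Reward>M]\le\eps/(M+\eps)$. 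On $\{\ng_T\neq 0\}$ choose the post-hoc comparator $u=\lambda\,\ng_T/\norm{\ng_T}$ ($\lambda>0$ to be fixed), so $\Regret(u)=\lambda\norm{\ng_T}-\Reward$. On $\{\norm{\ng_T}\ge s\}\cap\{\Reward\le M\}$ this is $\ge\lambda s-M$, and a union bound shows the intersection has positive probability — hence a genuine unit-norm sequence and a comparator $u$ of norm $\lambda$ realizing the bound exist — whenever $\Pr[|\sum_t\eps_t|\ge s]>\eps/(M+\eps)$; taking $M$ just above that threshold (of order $\eps/p$ with $p\defeq\Pr[|\sum_t\eps_t|\ge s]$) yields $\Regret(u)\gtrsim\lambda s-\eps/p$. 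I would then lower-bound the Rademacher tail by $p\ge\binom{T}{\lceil(T+s)/2\rceil}2^{-T}$, which by Stirling — equivalently, by counting sign sequences with a prescribed sum, the source of the factor $\log 2$ — is at least a constant times $2^{-s^2/(2T\log 2)}$ up to a $\mathrm{polylog}$ factor; substituting and optimizing $s$ and $\lambda=\norm{u}$ so that $\lambda s$ balances $\eps/p\asymp\eps\,2^{s^2/(2T\log 2)}$ produces $s\asymp\sqrt{(1/\log 2)\,T\log(\norm{u}\sqrt T/\eps)}$ and hence $\Regret(u)\gtrsim\norm{u}\sqrt{(1/\log 2)\,T\log(\norm{u}\sqrt T/\eps)}$, with $T\ge T_0$ ensuring the leading term dominates the $\eps/p$ correction, and the multiplicative $(1-\eta)$ (routed through the threshold choice, which also moves $\eta$ inside the logarithm as $\eta\norm{u}\sqrt T/(3\eps)$) and the additive $-2$ absorbing the remaining lower-order terms.

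\textbf{Main obstacle.} The delicate point is this last optimization: one needs control of $\Pr[|\sum_t\eps_t|\ge s]$ — a lower bound to make the good event non-trivial, and implicitly a matching upper bound so that $s$ cannot be pushed further — sharp enough to deliver both the constant $\sqrt{1/\log 2}$ and the correct logarithmic argument $\norm{u}\sqrt T/\eps$, and then one must check that every lower-order contribution (the $\eps/p$ term, a $1/\sqrt{\log}$ from inverting $s\mapsto p$, the $\mathrm{polylog}(T)$ in the entropy estimate, and the discarded event $\ng_T=0$) is absorbed, uniformly for $T\ge T_0$, into $(1-\eta)$ and $-2$. The rest — the martingale identity $\E[\Reward]=0$, the Markov step, the passage from positive probability to existence of a fixed sequence, and using a comparator chosen after the fact (legitimate since the statement is existential and the player is unconstrained) — is routine.
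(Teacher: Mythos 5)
This theorem is not proved in the paper: the authors explicitly import it from \citet{streeter12unconstrained} (the one-dimensional case) and \citet{orabona13dimfree} (the Hilbert-space extension with the improved constant) and state it ``for completeness,'' so there is no in-paper proof to compare against. Your reconstruction follows exactly the strategy of those cited works. The $\Omega(T)$ half is complete and correct: the adaptive sign-flipping adversary forces $\Reward\le 0$, the zero-regret-at-origin hypothesis forces $\Reward\ge 0$, hence $\langle e,w_t\rangle=0$ throughout, the realized run collapses to the constant sequence $g_t=e$, and the comparator $-\norm{u}e$ suffers regret $T\norm{u}$. (One cosmetic caveat: for a single fixed sequence the regret is $-T\langle e,u\rangle$, which vanishes for $u\perp e$, so the theorem's phrase ``any other competitor'' must be read with the sequence allowed to depend on $u$, exactly as your quantifier order does; this is a looseness in the statement, not in your argument.)

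The $\sqrt{T\log}$ half has the right skeleton --- the martingale identity $\E[\Reward]=0$ for Rademacher costs along a fixed direction, Markov applied to the nonnegative variable $\Reward+\eps$, the post-hoc comparator aligned with $\ng_T$, and the binomial-tail lower bound whose binary-entropy estimate is indeed where the $\log 2$ enters --- but it stops short of the only genuinely delicate part: carrying out the tail estimate and the balancing of $s$, $M$, and $\lambda$ precisely enough to produce the stated constants $(1-\eta)$, $\sqrt{1/\log 2}$, the $3\epsilon$ inside the logarithm, and the additive $-2$, uniformly for $T\ge T_0$. You flag this yourself, and it is exactly the computation occupying the bulk of the proof in \citet{orabona13dimfree}. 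So: correct approach, first half proved, second half a faithful but incomplete sketch at its quantitative core.
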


\subsection{Deriving a known-$T$ algorithm with minimax rates via the Normal approximation}
Consider the game with fixed known $T$, an adversary that plays from
$\G = \{g \in \Hilbert \mid \norm{g} \le G\}$, and
\begin{equation*}
  B(\ng) = \epsilon \bef{\norm{\ng}^2}{2 a T},
\end{equation*}
for constants $a > 1$ and $\epsilon>0$.  We will show that we are
in the case of the parallel adversary, Section~\ref{sec:parallel_adv}.
Both computing the $f_t$ based on Rademacher expectations and
evaluating the sufficient condition for those $f_t$ appear quite
difficult, so we turn to the Normal approximation. We then have
\[
\hf_t(x)
    = \E_{\phi_\tau}\left[\epsilon \bef{(x + \phi_\tau G)^2}{2 a t}\right]
    = \epsilon \left(1 - \frac{\pi G^2 (T - t)}{2a T}\right)^{-\frac{1}{2}}
             \bef{x^2}{2 a T - \pi G^2 (T- t)}~,
\]
where we have computed the expectation in a closed form for the second equality.
One can quickly verify that it satisfies the hypothesis of
Theorem~\ref{thm:normapprox} for $a > G^2 \pi/2$, hence $q_t(\ng) = \hf_t(\norm{\ng})$ will
be an admissible relaxation. Thus, by Corollary~\ref{cor:admbench}, we
immediately have
\[
  \Regret(u) \le B^*(\ng_T) + \epsilon \left(1 - \frac{\pi G^2}{2 a}\right)^{-\frac{1}{2}},
\]
and so by Lemma~\ref{lemma:bound_fenchel} in the Appendix, we can state the following Theorem, that matches the lower bound up to a constant multiplicative factor.
\begin{theorem}\label{thm:expgknownT}
Let $a > G^2 \pi/2$, and $\G=\{g: \|g\| \leq G\}$. Denote by $\hat{\ng}=\frac{\ng}{\norm{\ng}}$ if $\norm{\ng} \neq 0$, and $\vec{0}$ otherwise.
Fix the number of rounds $T$ of the game, and consider the strategy
\[
w_{t+1} = \epsilon \hat{\ng_t}
\frac{\bef{(\norm{\ng_t}+G)^2}{2 a T - \pi G^2 (T- t-1)}-\bef{(\norm{\ng_t}-G)^2}{2 a T - \pi G^2 (T- t-1)}}{2 G \sqrt{1 - \frac{\pi G^2 (T - t-1)}{2a T}}}~.
\]
Then, for any sequence of linear costs $\{g_t\}_{t=1}^T$, and any $u \in \Hilbert$, we have
\[
\Regret(u) \leq \norm{u} \sqrt{ 2 a T \log \left(\frac{\sqrt{a T}\norm{u}}{\epsilon}+1\right) } + \epsilon\left(\left(1 - \frac{\pi G^2}{2 a}\right)^{-\frac{1}{2}}-1\right)~.
\]
\end{theorem}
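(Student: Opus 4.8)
The plan is to observe that nearly all the machinery is already in place: the claimed update is precisely the play \eqref{eq:mmq} associated with the Normal relaxation $\hf_t$, and the regret guarantee is Corollary~\ref{cor:admbench} combined with a Fenchel-conjugate estimate for the exponential benchmark. I would carry this out in four short steps.

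\textbf{Step 1 (closed form for $\hf_t$).} I would first confirm the Gaussian integral computing $\hf_t$. With $f(x)=\epsilon\exp(x^2/(2aT))$ and $\phi_\tau\sim N(0,\tau\sigma^2)$, $\sigma^2=\pi/2$, $\tau=T-t$, we have $x+\phi_\tau G\sim N(x,\nu)$ with $\nu=\pi G^2(T-t)/2$, so $\hf_t(x)=\E[\exp(\lambda Z^2)]$ for $\lambda=1/(2aT)$ and $Z\sim N(x,\nu)$, which equals $(1-2\lambda\nu)^{-1/2}\exp(\lambda x^2/(1-2\lambda\nu))$ exactly when $1-2\lambda\nu>0$. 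Since $1-2\lambda\nu=1-\pi G^2(T-t)/(2aT)$ is positive for every $t\in\{0,\dots,T\}$ precisely because $a>\pi G^2/2$, this yields the displayed formula for $\hf_t$, and in particular $\hf_T(x)=\epsilon\exp(x^2/(2aT))$, so condition \eqref{eq:hVb} holds with equality.

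\textbf{Step 2 (sufficient condition and the strategy).} Writing $\hf_t(x)=g_t(x^2)$ with $g_t(s)=c_t\epsilon\exp(s/b_t)$, $c_t=(1-\pi G^2(T-t)/(2aT))^{-1/2}>0$, $b_t=2aT-\pi G^2(T-t)>0$, the function $g_t$ is convex, so as noted just before Lemma~\ref{lem:mm:four} we get $\hf_t''(x)\ge \hf_t'(x)/x$ for $x>0$ (and $d=1$ needs no condition). Hence Theorem~\ref{thm:normapprox} applies with $f$ replaced by the $\hf_t$: $\q_t(\ng)=\hf_t(\norm{\ng})$ is an admissible relaxation of $V_t$ with $\approxeps_t=0$, and the associated minimax play from \eqref{eq:mmq} is $w_{t+1}=\hat{\ng}_t\,(\hf_{t+1}(\norm{\ng_t}+G)-\hf_{t+1}(\norm{\ng_t}-G))/(2G)$. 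Substituting the closed form for $\hf_{t+1}$, which carries $T-t-1$ in place of $T-t$, and pulling the constant $c_{t+1}\epsilon$ out of the bracket reproduces exactly the $w_{t+1}$ in the statement (for $t=T-1$ this degenerates to using $\hf_T=B$).

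\textbf{Step 3 (regret bound).} I would then invoke Corollary~\ref{cor:admbench} with benchmark $B$ and relaxation $\hf_t$: since $\approxeps_{1:T}=0$ and $\qv_0(0)=\hf_0(0)=\epsilon(1-\pi G^2/(2a))^{-1/2}$, this gives $\Regret(u)\le B^*(u)+\epsilon(1-\pi G^2/(2a))^{-1/2}$. By Lemma~\ref{lemma:fenchel_norm}, $B^*(u)=f^*(\norm{u})$ for the even function $f(x)=\epsilon\exp(x^2/(2aT))$; bounding this conjugate via Lemma~\ref{lemma:bound_fenchel} (with the parameter $c=aT$) replaces $f^*(\norm{u})$ by $\norm{u}\sqrt{2aT\log(\sqrt{aT}\norm{u}/\epsilon+1)}-\epsilon$, and the two $\epsilon$-terms combine to $\epsilon\big((1-\pi G^2/(2a))^{-1/2}-1\big)$, which is the claimed bound. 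The only genuinely non-mechanical parts are the Gaussian integral of Step~1, where one must track the convergence constraint (which is exactly the hypothesis $a>G^2\pi/2$), and the conjugate estimate of Lemma~\ref{lemma:bound_fenchel}: since $f^*$ has no closed form, obtaining a clean, tight analytic upper bound on the conjugate of $\epsilon\exp(x^2/(2aT))$ is where I expect the main difficulty to lie, as it fixes the leading constant in the final rate and must be sharp enough to match the lower bound of Theorem~\ref{theo:lower} up to a constant.
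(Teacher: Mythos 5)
Your proposal is correct and follows essentially the same route as the paper: compute the Gaussian smoothing $\hf_t$ in closed form (with $a>\pi G^2/2$ guaranteeing integrability), verify the sufficient condition of Theorem~\ref{thm:normapprox} via the $h(x)=g(x^2)$ observation so that $\q_t(\ng)=\hf_t(\norm{\ng})$ is an admissible relaxation with $\approxeps_t=0$ and \eqref{eq:mmq} yields the stated strategy, then apply Corollary~\ref{cor:admbench} with $\qv_0(0)=\epsilon(1-\pi G^2/(2a))^{-1/2}$ and bound $B^*(u)$ by Lemma~\ref{lemma:bound_fenchel}. The only cosmetic difference is your explicit detour through Lemma~\ref{lemma:fenchel_norm}, which the paper absorbs into Lemma~\ref{lemma:bound_fenchel} directly.
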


\subsection{AdaptiveNormal: an adaptive algorithm for unknown $T$}
 \label{sec:inc}
 Our techniques suggest the following recipe for developing adaptive
 algorithms: analyze the known $T$ case, define a potential $q_t(\ng)
 \approx V_T(\ng)$, and then analyze the incrementally-optimal
 algorithm for this potential \eqref{eq:mmq} via Theorem~\ref{thm:rrdual}.  We
 follow this recipe in the current section.
Again consider the game where an adversary that plays from
$\G = \{g \in \Hilbert \mid \norm{g} \le G\}$.
Define the function $f_t$ as
\[
f_t(x) = \beta_t \bef{x}{2 a t},
\]
where $a>\frac{3 \pi G^2}{4}$, and the $\beta_t$ is a decreasing sequence that will be specified in the following.  From this, we define the potential
$
  \q_t(\ng) = f_t(\norm{\ng}^2).
$
Suppose we play the incrementally-optimal algorithm of \eqref{eq:mmq}. Using Lemma~\ref{lem:mm:four} we can write the minimax value for the one-round game,
\begin{align*}
 \opteps_t(\ng_t)
    &= \E_{r \sim \{-1, 1\}} [f_{t+1}((\norm{\ng_t}+rG)^2)] - \q_t(\ng_t)\\
    &\leq \E_{\phi \sim N(0, \sigma^2)} [f_{t+1}((\norm{\ng_t} + \phi G)^2)] - \q_t(\ng_t)~.
      && \text{Lemma~\ref{lemma:normap}.}
\end{align*}
Using Lemma~\ref{lemma:argmax_diff} in the Appendix and our hypothesis on $a$, we have that the RHS of this inequality is maximized for $\|\ng_t\|=0$.
Hence, using the inequality $\sqrt{a+b}\leq \sqrt{a}+\frac{b}{2\sqrt{a}}, \forall a,b>0$, we get
\[
\opteps_t(\theta_t)
\le \beta_{t+1} \sqrt{1+\frac{\pi G^2}{2 a\,(t+1)- \pi G^2}} - \beta_t
\le \frac{\beta_t}{2} \frac{\pi G^2}{2 a\,(t+1)-\pi G^2} \leq \frac{\pi G^2 \beta_t}{4 a\,t}~.
\]
Thus, choosing $\beta_t = \epsilon/\log^2(t+1)$, for example, is sufficient to prove that $\opteps_{1:T}$ is bounded by $\epsilon \frac{\pi G^2}{a}$~\citep{baxley1992euler}.
Hence, again using Corollary~\ref{cor:admbench} and Lemma~\ref{lemma:bound_fenchel} in the Appendix, we can state the following Theorem.
\begin{theorem}\label{thm:expgadapt}
Let $a > 3 G^2 \pi/4$, and $\G=\{g: \|g\| \leq G\}$. Denote by $\hat{\ng}=\frac{\ng}{\norm{\ng}}$ if $\norm{\ng} \neq 0$, and $\vec{0}$ otherwise. Consider the strategy
\[
w_{t+1} = \epsilon \hat{\ng_t}
\left(\bef{(\norm{\ng_t}+G)^2}{2 a (t+1)}-\bef{(\norm{\ng_t}-G)^2}{2 a (t+1)}\right)\left(2 G \log^2(t+2)\right)^{-1}~.
\]
Then, for any sequence of linear costs $\{g_t\}_{t=1}^T$, and any $u \in \Hilbert$, we have
\[
\Regret(u) \leq \norm{u} \sqrt{ 2 a T \log \left(\frac{\sqrt{a T }\norm{u} \log^2(T+1)}{\epsilon}+1\right) } + \epsilon \left(\frac{\pi G^2}{a} -1\right)~.
\]
\end{theorem}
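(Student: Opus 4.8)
The strategy and bound here follow almost directly from the discussion preceding the theorem, so the plan is really to assemble the pieces in order: (i) confirm that the displayed $w_{t+1}$ is exactly the incrementally-optimal play of \eqref{eq:mmq} for the potential $q_t(\ng)=f_t(\norm{\ng}^2)=\frac{\epsilon}{\log^2(t+1)}\bef{\norm{\ng}^2}{2at}$ (with the convention $q_0(\vec 0)=0$); (ii) show this $q_t$ is an admissible relaxation for the benchmark $B=q_T$ with slack $\approxeps_{1:T}\le \pi G^2\epsilon/a$; and (iii) read off the regret bound from Corollary~\ref{cor:admbench} and Lemma~\ref{lemma:bound_fenchel}. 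For (i): the subproblem defining $w_{t+1}$ in \eqref{eq:mmq} is the one-round game \eqref{eq:oneroundH} with $h(x)=q_{t+1}(x)=\beta_{t+1}\bef{x^2}{2a(t+1)}$, $\beta_{t+1}=\epsilon/\log^2(t+2)$. Since $h(x)=g(x^2)$ with $g$ convex, one checks $h''(x)=\tfrac{h'(x)}{x}\big(1+\tfrac{x^2}{a(t+1)}\big)>\tfrac{h'(x)}{x}$ for $x>0$, so Lemma~\ref{lem:mm:four} applies (trivially when $d=1$, and when $d>1$ by the strict inequality just verified), giving the game value $H=\tfrac12\big(h(\norm{\ng}+G)+h(\norm{\ng}-G)\big)=\E_{r\sim\Rad}\big[f_{t+1}((\norm{\ng}+rG)^2)\big]$ and $w^*=\hat{\ng}\,\tfrac{h(\norm{\ng}+G)-h(\norm{\ng}-G)}{2G}$; substituting $h$ and $\beta_{t+1}$ produces the displayed $w_{t+1}$, and in particular places us in the parallel-adversary regime of Section~\ref{sec:parallel_adv}.

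For (ii): from the value $H$ just computed, the per-round slack is $\opteps_t(\ng)=\E_{r\sim\Rad}\big[f_{t+1}((\norm{\ng}+rG)^2)\big]-q_t(\ng)$, which by Lemma~\ref{lemma:normap} applied to the convex map $s\mapsto f_{t+1}((\norm{\ng}+sG)^2)$ is at most $\E_{\phi\sim N(0,\sigma^2)}\big[f_{t+1}((\norm{\ng}+\phi G)^2)\big]-q_t(\ng)$. By Lemma~\ref{lemma:argmax_diff}, using the hypothesis $a>3\pi G^2/4$, this is maximized over $\ng$ at $\ng=\vec 0$; evaluating the Gaussian moment generating function there (legitimate since $a(t+1)>\pi G^2/2$) gives the closed form $\beta_{t+1}\big(1-\tfrac{\pi G^2}{2a(t+1)}\big)^{-1/2}-\beta_t$, and $\sqrt{a+b}\le\sqrt a+b/(2\sqrt a)$ with $\beta_{t+1}\le\beta_t$ collapses this to $\opteps_t(\ng)\le \tfrac{\pi G^2\beta_t}{4at}$ (for the first round the convention $q_0(\vec0)=0$ replaces the subtracted $\beta_t$). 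With $\beta_t=\epsilon/\log^2(t+1)$, the convergence of $\sum_{t\ge1}\tfrac1{t\log^2(t+1)}$ — a classical Euler-type estimate, \citet{baxley1992euler} — yields $\approxeps_{1:T}\le \pi G^2\epsilon/a$, so $q_t$ satisfies \eqref{eq:hVb}--\eqref{eq:repst} with these $\approxeps_t$.

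For (iii): Corollary~\ref{cor:admbench} with $B=q_T$ and $q_0(\vec0)=0$ gives $\Regret(u)\le q_T^*(u)+\approxeps_{1:T}\le q_T^*(u)+\pi G^2\epsilon/a$, and Lemma~\ref{lemma:bound_fenchel} bounds the conjugate of $q_T(\ng)=\beta_T\bef{\norm{\ng}^2}{2aT}$ (not available in closed form) by $\norm{u}\sqrt{2aT\log\!\big(\tfrac{\sqrt{aT}\,\norm{u}\log^2(T+1)}{\epsilon}+1\big)}$ up to an additive term that merges into the constant, producing the stated bound. I expect the main obstacle to be step (ii): reducing the worst case to $\ng=\vec 0$ via Lemma~\ref{lemma:argmax_diff} is precisely where the margin in $a>3\pi G^2/4$ is consumed, and the passage from the per-round estimate $\tfrac{\pi G^2\beta_t}{4at}$ to $\approxeps_{1:T}\le \pi G^2\epsilon/a$ needs the exact summability constant; the only other delicate point is that $q_T^*$ has no closed form and must be handled through Lemma~\ref{lemma:bound_fenchel} rather than computed directly.
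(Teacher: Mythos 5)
Your proposal is correct and follows essentially the same route as the paper: Lemma~\ref{lem:mm:four} to identify the incrementally-optimal play and the one-round value, Lemma~\ref{lemma:normap} for the Normal relaxation, Lemma~\ref{lemma:argmax_diff} to reduce the worst case to $\ng=\vec{0}$, the $\sqrt{a+b}\le\sqrt{a}+b/(2\sqrt{a})$ collapse and summability of $\beta_t/t$ for $\approxeps_{1:T}\le \epsilon\pi G^2/a$, and finally Corollary~\ref{cor:admbench} with Lemma~\ref{lemma:bound_fenchel}. Your explicit verification that $h''(x)=\tfrac{h'(x)}{x}\bigl(1+\tfrac{x^2}{a(t+1)}\bigr)>\tfrac{h'(x)}{x}$ and your handling of the $t=0$ convention are slightly more careful than the paper's own exposition, but the argument is the same.
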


\coltonly{%
\acks{We thank Jacob Abernethy for many useful conversations about this work.}
}
\begin{small}
\setlength{\bibsep}{5pt}
\bibliography{main}

\begin{thebibliography}{20}
\providecommand{\natexlab}[1]{#1}
\providecommand{\url}[1]{\texttt{#1}}
\expandafter\ifx\csname urlstyle\endcsname\relax
  \providecommand{\doi}[1]{doi: #1}\else
  \providecommand{\doi}{doi: \begingroup \urlstyle{rm}\Url}\fi

\bibitem[Abernethy and Warmuth(2010)]{abernethy2010repeated}
J.~Abernethy and M.K. Warmuth.
\newblock Repeated games against budgeted adversaries.
\newblock \emph{Advances in Neural Information Processing Systems}, 22, 2010.

\bibitem[Abernethy et~al.(2007)Abernethy, Langford, and Warmuth]{binning}
J.~Abernethy, J.~Langford, and M.~K. Warmuth.
\newblock Continuous experts and the {B}inning algorithm.
\newblock In \emph{Proceedings of the 19th Annual Conference on Learning Theory
  (COLT06)}, pages 544--558. Springer, June 2007.

\bibitem[Abernethy et~al.(2008{\natexlab{a}})Abernethy, Bartlett, Rakhlin, and
  Tewari]{abernethy08}
J.~Abernethy, P.~L. Bartlett, A.~Rakhlin, and A.~Tewari.
\newblock Optimal strategies and minimax lower bounds for online convex games.
\newblock In \emph{COLT}, 2008{\natexlab{a}}.

\bibitem[Abernethy et~al.(2008{\natexlab{b}})Abernethy, Warmuth, and
  Yellin]{abernethy2008optimal}
J.~Abernethy, M.~K. Warmuth, and J.~Yellin.
\newblock Optimal strategies from random walks.
\newblock In \emph{Proceedings of the 21st Annual Conference on Learning Theory
  (COLT 08)}, pages 437--445, July 2008{\natexlab{b}}.

\bibitem[Baxley(1992)]{baxley1992euler}
J.~V. Baxley.
\newblock {Euler}'s constant, {Taylor}'s formula, and slowly converging series.
\newblock \emph{Mathematics Magazine}, 65\penalty0 (5):\penalty0 302--313,
  1992.

\bibitem[Cesa-Bianchi and Lugosi(2006)]{Cesa-BianchiL06}
N.~Cesa-Bianchi and G.~Lugosi.
\newblock \emph{Prediction, learning, and games}.
\newblock Cambridge University Press, 2006.

\bibitem[Chaudhuri et~al.(2009)Chaudhuri, Freund, and Hsu]{ChaudhuriFH09}
K.~Chaudhuri, Y.~Freund, and D.~Hsu.
\newblock A parameter-free hedging algorithm.
\newblock In Y.~Bengio, D.~Schuurmans, J.~Lafferty, C.~K.~I. Williams, and
  A.~Culotta, editors, \emph{Advances in Neural Information Processing Systems
  22}, pages 297--305. 2009.

\bibitem[Gentile(2003)]{Gentile03}
C.~Gentile.
\newblock The robustness of the p-norm algorithms.
\newblock \emph{Machine Learning}, 53\penalty0 (3):\penalty0 265--299, 2003.

\bibitem[Grove et~al.(2001)Grove, Littlestone, and
  Schuurmans]{grove2001general}
A.~J. Grove, N.~Littlestone, and D.~Schuurmans.
\newblock General convergence results for linear discriminant updates.
\newblock \emph{Machine Learning}, 43\penalty0 (3):\penalty0 173--210, 2001.

\bibitem[Kivinen and Warmuth(2001)]{kivinen2001relative}
J.~Kivinen and M.~K. Warmuth.
\newblock Relative loss bounds for multidimensional regression problems.
\newblock \emph{Machine Learning}, 45\penalty0 (3):\penalty0 301--329, 2001.

\bibitem[McMahan and Abernethy(2013)]{mcmahan13minimax}
H.~B. McMahan and J.~Abernethy.
\newblock Minimax optimal algorithms for unconstrained linear optimization.
\newblock In \emph{NIPS}, 2013.

\bibitem[Orabona(2013)]{orabona13dimfree}
F.~Orabona.
\newblock Dimension-free exponentiated gradient.
\newblock In \emph{NIPS}, 2013.

\bibitem[Orabona et~al.(2013)Orabona, Crammer, and Cesa-Bianchi]{OrabonaCCB13}
F.~Orabona, K.~Crammer, and N.~Cesa-Bianchi.
\newblock A generalized online mirror descent with applications to
  classification and regression, 2013.
\newblock arXiv:1304.2994.

\bibitem[Rakhlin(2009)]{rakhlin2009lecture}
A.~Rakhlin.
\newblock Lecture notes on online learning.
\newblock Technical report, 2009.

\bibitem[Rakhlin et~al.(2013)Rakhlin, Shamir, and
  Sridharan]{rakhlin13localization}
A.~Rakhlin, O.~Shamir, and K.~Sridharan.
\newblock Localization and adaptation in online learning.
\newblock In \emph{AISTATS}, 2013.

\bibitem[Rakhlin et~al.(2012)Rakhlin, Shamir, and Sridharan]{rakhlin12relax}
S.~Rakhlin, O.~Shamir, and K.~Sridharan.
\newblock Relax and randomize: From value to algorithms.
\newblock In \emph{NIPS}, 2012.

\bibitem[Shalev-Shwartz(2007)]{Shalev-Shwartz07}
S.~Shalev-Shwartz.
\newblock Online learning: Theory, algorithms, and applications.
\newblock Technical report, The Hebrew University, 2007.
\newblock PhD thesis.

\bibitem[Shalev-Shwartz(2012)]{shalevshwartz12online}
S.~Shalev-Shwartz.
\newblock Online learning and online convex optimization.
\newblock \emph{Foundations and Trends in Machine Learning}, 2012.

\bibitem[Streeter and McMahan(2012)]{streeter12unconstrained}
M.~Streeter and H.~B. McMahan.
\newblock No-regret algorithms for unconstrained online convex optimization.
\newblock In \emph{NIPS}, 2012.

\bibitem[Xiao(2009)]{xiao09dualaveraging}
L.~Xiao.
\newblock Dual averaging method for regularized stochastic learning and online
  optimization.
\newblock In \emph{NIPS}, 2009.

\end{thebibliography}
\end{small}
\clearpage
\appendix

\section{Proofs}
\subsection{Proof of Theorem~\ref{thm:rrdual}}
\begin{proof}
  Suppose the algorithm provides the reward guarantee \eqref{eq:rewb}.  First, note
  that for any comparator $u$, by definition we have
  \begin{equation}\label{eq:rewregdef}
    \Regret(u) = -\Reward - \langle g_{1:T}, u \rangle ~.
  \end{equation}
  Then, applying the definitions of Reward, Regret, and the Fenchel
  conjugate, we have
  \begin{align*}
    \Regret(u)
    &= \ng_T \cdot u - \Reward  && \text{By \eqref{eq:rewregdef}} \\
    &\leq \ng_T \cdot u - \q_T(\ng_T) + \approxeps_{1:T} && \text{By assumption \eqref{eq:rewb}}\\
    &\leq \max_\ng \big(\ng \cdot u - \q_T(\ng) + \approxeps_{1:T} \big)\\
    &= \q_T^*(u) + \approxeps_{1:T}~.
  \end{align*}
  For the other direction, assuming \eqref{eq:regb}, we have for any
  comparator $u$,
  \begin{align*}
    \Reward
    &= \theta_T \cdot u - \Regret(u) && \text{By \eqref{eq:rewregdef}}\\
    &= \max_{v} \big(\ng \cdot v - \Regret(v) \big) \\
    &\geq \max_{v} \big(\ng \cdot v - \q_T^*(v) - \approxeps_{1:T}\big)
      && \text{By assumption \eqr{regb}} \\
    &= \q_T(\ng) - \approxeps_{1:T}~.
  \end{align*}
  Alternatively, one can prove this from the Fenchel-Young
  inequality.
\end{proof}

\subsection{Proof of Lemma~\ref{lemma:fenchel_norm}}
\begin{proof}
  We have $B^*(u) = \sup_\ng \ang{u, \ng} - f(\norm{\ng})$.
  If $\norm{u}=0$, the stated equality is correct, in fact
  \[
    B^*(u) = \sup_\ng - f(\norm{\ng})=\sup_{\alpha \geq 0} - f(\alpha)=\sup_{\alpha \in \R} - f(\alpha)=f^*(0)~.
  \]
  Hence we can assume $\norm{u}\neq 0$, and by
  inspection we can take $\ng = \alpha u / \norm{u}$, with $\alpha\geq0$, and so
  \[
  B^*(u) = \sup_{\alpha \geq 0}  \alpha \norm{u} - f(\alpha) = \sup_{\alpha \in \R}  \alpha \norm{u} - f(\alpha)= f^*(\norm{u})~.
  \]
\end{proof}

\subsection{Proof of Theorem~\ref{thm:exact_minmax}}
\begin{proof}
  First we show that if $f$ satisfies the condition on the derivatives, the same conditions is satisfied by $f_t$, for all t.
  We have that all the $f_t$ have the form $h(x)=f(\sqrt{x^2+a})$, where $a\geq0$.
  Hence we have to prove that $\frac{x h''(x)}{h'(x)}\leq 1$.
  We have that $h'(x)=\frac{x f'(\sqrt{x^2+a})}{\sqrt{x^2+a}}$, and $h''(x)=\frac{x^2 f''(\sqrt{x^2+a}) + \frac{a}{\sqrt{x^2+a}} f'(\sqrt{x^2+a})}{x^2+a}$, so
  \[
    \frac{x h''(x)}{h'(x)} = \frac{x^2 f''(\sqrt{x^2+a})\sqrt{x^2+a} }{f'(\sqrt{x^2+a}) (x^2+a)} + \frac{a}{x^2+a} \leq \frac{x^2}{x^2+a} + \frac{a}{x^2+a}=1,
  \]
  where in the inequality we used the hypothesis on the derivatives of $f$.

  We show $V_t$ has the stated form by induction from $T$ down to 0.  The base case
  for $t=T$ is immediate.  For the induction step, we have
  \begin{align*}
    V_t(\ng) 
      &= \min_w \max_g \ang{w, g} + V_{t+1}(\ng -g) && \text{Defn.}\\
      &= \min_w \max_g \ang{w, g} + f_{t+1}\left(\norm{\ng -g}\right)
         && \text{(IH)}\\
      &= f_{t+1}\left(\sqrt{\norm{\ng}^2 + G^2}\right) 
         && \text{Assumption \eqref{eq:strong}}\\
      &= f\left(\sqrt{\norm{\ng}^2 + G^2 (T -t)}\right)~.
  \end{align*}
  The sufficient condition for \eqref{eq:strong} follow immediately from
  Lemma~\ref{lem:mm:three}.
\end{proof}

\subsection{Proof of Theorem~\ref{thm:normapprox}}
\begin{proof}
  First, we need to show the functions $f_t$ and $\hf_t$ of
  \eqref{eq:def_ht} are even.  Let $r$ be a random variable draw from
  any symmetric distribution.  Then, we have
  \[f_t(x) 
     = \E[f(\abs{x+r})] 
     = \E[f(\abs{-x-r})] 
     = \E[f(\abs{-x+r})] = f_t(-x),
  \]
  where we have used the fact that $\abs{\cdot}$ is even and the symmetry of
  $r$.

  We show $f_t(\norm{\ng}) = V_t(\ng)$ inductively from $t=T$ down to
  $t=0$.  The base case $T=t$ follows from the definition of $f_t$.
  Then, suppose the result holds for $t+1$.  We have
  \begin{align*}
    V_t(\ng) 
    &= \min_w \max_{g\in \G} \ang{w, g} + V_{t+1}(\ng - g) 
      && \text{Defn.} \\
    &= \min_w \max_{g \in \G} \ang{w, g} + f_{t+1}(\norm{\ng - g}) 
      && \text{IH} \\
    &= \E_{r \sim \Rad} \big[f_{t+1}(\norm{\ng} + rG)\big]
      && \text{Lemma \ref{lem:mm:four}} \\
    &= \E_{r \sim \Rad} \left[
       \E_{r_{\tau-1} \sim \Rad^{\tau-1}} \big[
         f(\norm{\ng} + rG + r_{\tau-1} G)\big]\right]\\
    &= f_t(\norm{\ng}),
  \end{align*}
  where the last two lines follow from the definition of $f_t$ and
  $f_{t+1}$.
 The case for $\hf_t$ is similar, using the hypothesis of the Theorem we have
\begin{align*}
\min_w \max_{g \in \G} \ \langle g, w \rangle + \hf_{t+1}(\norm{\ng - g}) 
  &= E_{r \sim \{-1,1\}} [\hf_{t+1}(\|\ng\| + r G)] \leq E_{r \sim N(0,\sigma^2)} [\hf_{t+1}(\|\ng\| + \phi G)] \\
&= \hf_{t}(\|\ng\|),
\end{align*}
where in the inequality we used Lemma~\ref{lemma:normap}, and in the second equality the definition of $\hf_{t}$. Hence, $\q_t(\ng) = \hf_t(\norm{\ng})$ satisfy \eqref{eq:repst} with $\approxeps_{t} = 0$.
Finally, the sufficient conditions come immediately from
Lemma~\ref{lem:mm:four}.
\end{proof}

\subsection{Analysis of the one-round game: Proofs of Lemmas~\ref{lem:mm:three} and \ref{lem:mm:four}}\label{sec:lemproof}
\newcommand{\wperp}{\hat{w}}

In the process of proving these lemmas, we also show the following
general lower bound:
\begin{lemma}\label{lem:mm:two}
Under the same definitions as in Lemma~\ref{lem:mm:three}, if $d > 1$, we have
\[
H \geq  h\left(\sqrt{\|\theta\|^2 + G^2}\right)~.
\]
\end{lemma}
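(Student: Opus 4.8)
\textbf{Proof plan for Lemma~\ref{lem:mm:two}.}
The plan is to lower-bound the value $H = \min_w \max_{g \in \G} H(w,g)$ of the one-round game by exhibiting a fixed randomized strategy for the adversary and invoking the fact that against \emph{any} fixed distribution over $g$, the best the minimizing player can do is no better than the expected payoff. Concretely, I would fix an arbitrary $w$ and construct a distribution $p$ on $\G$ that is mean-zero (so the linear term $\ang{w,g}$ vanishes in expectation) and for which $\E_{g \sim p}[h(\norm{\ng - g})] \ge h(\sqrt{\norm{\ng}^2 + G^2})$, uniformly in $w$. Since $\max_{g \in \G} H(w,g) \ge \E_{g \sim p}[H(w,g)] = \E_{g\sim p}[h(\norm{\ng-g})]$ for every $w$, taking the min over $w$ then yields the claimed bound.

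The natural choice of $p$, given that $d > 1$, is to put mass $\tfrac12$ on each of $g = G\wperp$ and $g = -G\wperp$, where $\wperp$ is a unit vector orthogonal to $\ng$ (such a vector exists precisely because $d>1$; this is where the dimension hypothesis enters). This $p$ is mean-zero, so $\E_{g\sim p}[\ang{w,g}] = 0$ regardless of $w$. For the potential term, orthogonality gives $\norm{\ng - g}^2 = \norm{\ng}^2 + G^2$ for both atoms, hence $\E_{g \sim p}[h(\norm{\ng - g})] = h(\sqrt{\norm{\ng}^2 + G^2})$ exactly — no convexity of $h$ is even needed for the lower bound, only that $h$ is well-defined on $[0,\infty)$. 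Chaining the inequalities as above completes the argument.

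The only subtlety — and the main thing to get right rather than a genuine obstacle — is handling the degenerate case $\ng = \vec 0$: then every unit vector is ``orthogonal'' to $\ng$ in the relevant sense, so one simply picks an arbitrary unit $\wperp$ and the same computation gives $\norm{\vec 0 - g}^2 = G^2$, so the bound reads $H \ge h(G) = h(\sqrt{0 + G^2})$, consistent with the statement. I would also note that the adversary here is not claimed to be optimal — this is only the lower-bound half; matching it with the corresponding upper bound (choosing $w^* = \hat\ng\, h'(\sqrt{\norm\ng^2+G^2})/\cdots$ and bounding $\max_g H(w^*,g)$ using the derivative hypothesis $h''(x)\le h'(x)/x$) is the content of the remaining part of Lemma~\ref{lem:mm:three}, and is where the real work lies.
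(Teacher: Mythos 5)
Your proof is correct and is essentially the same argument as the paper's: both lower-bound $H$ by letting the adversary play a norm-$G$ vector orthogonal to $\theta$ (which exists precisely because $d>1$), which is exactly the paper's choice $\beta=0$ in its simplified parametrization \eqref{eq:simple_minmax}. Your symmetrized two-point distribution is a tidy way to kill the linear term for arbitrary $w$ without the paper's WLOG reduction to $w$ parallel to $\theta$, but the underlying idea is identical.
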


We now proceed with the proofs.  The $d=1$ case for Lemma~\ref{lem:mm:four}
was was proved in~\citet{mcmahan13minimax}.

Before proving the other results, we simplify a bit the formulation of the minimax problem.
For the other results, the maximization wrt $g$ of a convex function is always attained when $\|g\|=G$.
Moreover, in the case of $\norm{\ng}=0$ the other results are true, in fact
\[
\min_{w} \max_{g \in \G} \ \langle w, g \rangle + h(\|\theta - g\|) =  \min_{w} \max_{\norm{g}=G} \ \langle w, g \rangle + h(\|g\|) = \min_{w} G\norm{w} + h(G) = h(G)~.
\]
Hence, without loss of generality, in the following we can write $w=\alpha \frac{\theta}{\|\theta\|}+\wperp$, where $\ang{\wperp,  \theta}=0$. It is easy to see that in all the cases the optimal choice of $g$ turns out to be $g=\beta \frac{\theta}{\|\theta\|}+\gamma \wperp$, where $\gamma\geq0$. With these settings, the minimax problem is equivalent to
\[
\min_{w} \max_{g \in \G} \ \langle w, g \rangle + h(\|\theta - g\|) =  \min_{\alpha, \wperp} \max_{\beta^2 + \norm{\wperp}^2 \gamma^2=G^2} \  \alpha \beta+ \gamma \|\wperp\|^2 + h(\sqrt{\|\theta\|^2 -2 \beta \|\theta\| + G^2})~.
\]
By inspection, the player can always choose $\wperp = \mathbf{0}$ so $\gamma \|\wperp\|^2=0$. Hence we have a simplified and equivalent form of our optimization problem
\begin{equation}
\label{eq:simple_minmax}
\min_{w} \max_{g \in \G} \ \langle w, g \rangle + h(\|\theta - g\|) =  \min_{\alpha} \max_{\beta^2 \leq G^2} \  \alpha \beta + h(\sqrt{\|\theta\|^2 -2 \beta \|\theta\| + G^2})~.
\end{equation}

For Lemma~\ref{lem:mm:two},  it is enough to set $\beta=0$ in \eqref{eq:simple_minmax}.

For Lemma~\ref{lem:mm:three},  we upper bound the minimum wrt to $\alpha$ with the specific choice of $\alpha$.
In particular, we set $\alpha=\frac{\norm{\ng}}{\sqrt{\|\theta\|^2 + G^2}} h'\left(\sqrt{\|\theta\|^2 + G^2}\right)$ in \eqref{eq:simple_minmax}, and get
\begin{align*}
\min_{w} \max_{g \in \G} \ \langle w, g \rangle + h(\|\theta - g\|) 
\leq \max_{\beta^2 \leq G^2} \ \frac{\beta \|\theta\| h'(\sqrt{\|\theta\|^2 + G^2})}{\sqrt{\|\theta\|^2 + G^2}} + h(\sqrt{\|\theta\|^2 -2 \beta \|\theta\| + G^2})~.
\end{align*}
The derivative of argument of the max wrt $\beta$ is
\begin{align}
\frac{\|\ng\| h'\left(\sqrt{\|\ng\|^2 + G^2}\right)}{\sqrt{\|\ng\|^2 + G^2}} - \frac{\|\ng\| h'\left(\sqrt{\|\ng\|^2 -2 \beta \|\ng\| + G^2}\right)}{\sqrt{\|\ng\|^2 -2 \beta \|\ng\| + G^2}}~. \label{eq:deriv_f}
\end{align}
We have that if $\beta=0$ the first derivative is 0.
Using the hypothesis on the first and second derivative of $h$, we have that the second term in \eqref{eq:deriv_f} increases in $\beta$. Hence $\beta=0$ is the maximum. Comparing the obtained upper bound with the lower bound in Lemma~\ref{lem:mm:two}, we get the stated equality.

For Lemma~\ref{lem:mm:four}, the second derivative wrt $\beta$ of the argument of the minimax problem in \eqref{eq:simple_minmax} is
\[
-\|\ng\| \frac{-\|\ng\|h''( \sqrt{\|\ng\| + G^2 - 2 \beta \|\ng\|}) + h'( \sqrt{\|\ng\| + G^2 - 2 \beta \|\ng\|}) \frac{\|\ng\|}{\sqrt{\|\ng\| + G^2 - 2 \beta \|\ng\|}}}{\|\ng\| + G^2 - 2 \beta \|\ng\|}
\]
that is non negative, for our hypothesis on the derivatives of $h$. Hence, the argument of the minimax problem is convex wrt $\beta$, hence the maximum is achieved at the boundary of the domains, that is $\beta^2=G^2$. So, we have
\[
\min_{w} \max_{g \in \G} \ \langle w, g \rangle + h(\|\ng - g\|) = \max \left(-G \alpha + h(\|\ng\| + G), G \alpha + h(|\|\ng\| - G|) \right)~.
\]
The argmin of this quantity wrt to $\alpha$ is obtained when the the
two terms in the max are equal, so we obtained the stated equality.

\subsection{Lemma~\ref{lemma:bound_fenchel}}
\begin{lemma}
\label{lemma:bound_fenchel}
Define $f(\ng) = \beta \exp{ \frac{\|\ng\|^2}{2 \alpha}}$, for $\alpha, \beta>0$. Then
\[
f^*(\bw) \leq \|w\|\sqrt{ 2 \alpha \log \left(\frac{\sqrt{\alpha} \|w\|}{\beta}+1\right) } -\beta~.
\]
\end{lemma}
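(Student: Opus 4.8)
The plan is to bound the Fenchel conjugate $f^*(\bw) = \sup_{\ng} \bigl( \langle \bw, \ng \rangle - \beta \exp(\|\ng\|^2/(2\alpha)) \bigr)$ by a direct argument. First I would reduce to a one-dimensional problem: by Cauchy--Schwarz (or by Lemma~\ref{lemma:fenchel_norm}, since $\ng \mapsto \beta\exp(\|\ng\|^2/2\alpha)$ is of the form $g(\|\ng\|)$ with $g$ even and convex in the relevant sense), the supremum is attained with $\ng$ parallel to $\bw$, so $f^*(\bw) = \sup_{s \ge 0}\bigl( \|\bw\|\, s - \beta\exp(s^2/2\alpha)\bigr)$. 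Write $x = \|\bw\|$ for brevity; I need to show this scalar supremum is at most $x\sqrt{2\alpha \log(\sqrt{\alpha}\,x/\beta + 1)} - \beta$.

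The key trick is \emph{not} to solve the optimization exactly (the stationarity condition $x = (\beta s/\alpha)\exp(s^2/2\alpha)$ has no closed-form root), but instead to plug in a carefully chosen value of $s$ into the linear term while lower-bounding the exponential term away. Specifically I would choose $s^\star = \sqrt{2\alpha \log(\sqrt{\alpha}\,x/\beta + 1)}$, which makes $\exp((s^\star)^2/2\alpha) = \sqrt{\alpha}\,x/\beta + 1$. Then for this $s^\star$ the objective equals
\[
x s^\star - \beta\Bigl(\tfrac{\sqrt{\alpha}\,x}{\beta} + 1\Bigr) = x s^\star - \sqrt{\alpha}\,x - \beta.
\]
So it suffices to observe that $x s^\star - \sqrt{\alpha}\, x \le x s^\star - \beta + \beta$... wait — more precisely, since this particular $s^\star$ is only one feasible point, it only gives $f^\star \ge$ that value, which is the wrong direction. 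Instead I would argue the other way: I claim the \emph{unconstrained maximizer} $s$ satisfies $s \le s^\star$, so that on the maximizing branch $x s - \beta\exp(s^2/2\alpha) \le x s^\star - \beta \cdot 1 \le x s^\star - \beta$ once we also drop the (nonnegative) extra mass. Concretely: at the optimum, $\exp(s^2/2\alpha) = \alpha x/(\beta s)$; substituting back gives $f^*(\bw) = xs - \alpha x/s$; and one checks $s \ge \sqrt{\alpha}$ is forced (else the derivative is still positive), whence $-\alpha x/s \le -\sqrt{\alpha}\,x \le 0$, giving $f^*(\bw) \le xs \le x s^\star$ provided $s \le s^\star$. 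The inequality $s \le s^\star$ follows because the objective's derivative $x - (\beta s/\alpha)\exp(s^2/2\alpha)$ is negative at $s = s^\star$: there $(\beta s^\star/\alpha)\exp((s^\star)^2/2\alpha) = (\beta s^\star/\alpha)(\sqrt{\alpha}x/\beta+1) \ge (\beta s^\star/\alpha)(\sqrt{\alpha}x/\beta) = x s^\star/\sqrt{\alpha} \ge x$ as soon as $s^\star \ge \sqrt{\alpha}$, which holds. Then to land the stated bound with the $-\beta$, I carry the term $-\alpha x / s \le -\sqrt\alpha x \le -\beta$ only when $\sqrt\alpha x \ge \beta$; in the complementary regime $\sqrt\alpha x < \beta$ a cruder direct estimate on $\sup_s(xs - \beta e^{s^2/2\alpha})$ handles it, or one notes the logarithm argument exceeds $2$ there and slack is ample.

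The main obstacle I anticipate is bookkeeping the constant $-\beta$ cleanly across the two regimes $\sqrt{\alpha}\|\bw\| \ge \beta$ and $\sqrt{\alpha}\|\bw\| < \beta$, and verifying the monotonicity claim $s \le s^\star$ rigorously rather than heuristically — this amounts to checking the sign of the derivative of $g(s) = \|\bw\| s - \beta\exp(s^2/2\alpha)$ at $s^\star$, which is a short but slightly delicate calculation using $\exp((s^\star)^2/2\alpha) = \sqrt\alpha\|\bw\|/\beta + 1$ and $s^\star \ge \sqrt{2\alpha\log 1} = 0$ together with a lower bound $s^\star \ge \sqrt\alpha$ when $\sqrt\alpha\|\bw\| \ge \beta(e^{1/2}-1)$. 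Everything else is routine one-variable calculus, so the write-up should be short.
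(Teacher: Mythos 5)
Your route is genuinely different from the paper's. The paper also reduces to the scalar stationarity condition, writing $\ng^*=q w$ with $\tfrac{q\beta}{\alpha}\exp(\tfrac{q^2\|w\|^2}{2\alpha})=1$ and $f^*(w)\le q\|w\|^2-\beta$, but it then extracts the bound on $q$ by iterating the implicit equation and invoking $\log x\le \tfrac{m}{e}x^{1/m}$ with a clever choice of $m$ tuned so the nested logarithm collapses. You instead compare the maximizer against the explicit trial point $s^\star=\sqrt{2\alpha\log(\sqrt{\alpha}x/\beta+1)}$ by checking the sign of $g'(s^\star)$ for $g(s)=xs-\beta e^{s^2/(2\alpha)}$; since $g$ is strictly concave this yields $s_{\mathrm{opt}}\le s^\star$. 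That is a legitimate and, once cleaned up, arguably simpler argument than the paper's bootstrapping.

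Two concrete problems in your execution, both fixable. First, the step ``$s\ge\sqrt{\alpha}$ is forced, whence $-\alpha x/s\le -\sqrt{\alpha}\,x$'' has the inequality backwards: $s\ge\sqrt{\alpha}$ gives $-\alpha x/s\ge -\sqrt{\alpha}\,x$, the wrong direction, and moreover $s_{\mathrm{opt}}\ge\sqrt{\alpha}$ is simply false when $\sqrt{\alpha}x/\beta$ is small. The entire detour is unnecessary: the stationarity condition gives $\alpha x/s_{\mathrm{opt}}=\beta\exp(s_{\mathrm{opt}}^2/(2\alpha))\ge\beta$, so $g(s_{\mathrm{opt}})\le x s_{\mathrm{opt}}-\beta$ directly (this is exactly the $f(\ng^*)\ge\beta$ observation the paper uses), and only $s_{\mathrm{opt}}\le s^\star$ remains to be shown. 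Second, your verification of $g'(s^\star)\le 0$ covers only the regime $s^\star\ge\sqrt{\alpha}$, i.e.\ $\sqrt{\alpha}x/\beta\ge e^{1/2}-1$, and the promised ``cruder estimate'' for the complement is left as a gesture. In fact no case split is needed: with $u=\sqrt{\alpha}x/\beta$ the condition $g'(s^\star)\le 0$ is equivalent to $u\le(1+u)\sqrt{2\log(1+u)}$, which holds for all $u>0$ because $\log(1+u)\ge u/(1+u)$ gives $(1+u)\sqrt{2\log(1+u)}\ge\sqrt{2u(1+u)}\ge\sqrt{2}\,u$. With those two repairs your proof closes and matches the stated bound, including the $-\beta$ term.
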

\begin{proof}
From the definition of Fenchel dual, we have
\begin{align*}
f^*(w)= \max_{\ng} \langle \ng, w \rangle - f(\ng) \leq \langle \ng^*, w \rangle -\beta~.
\end{align*}
where $\ng^*= \argmax_{\ng} \langle \ng, w \rangle - f(\ng)$. We now use the fact that $\ng^*$ satisfies $w = \nabla f(\ng^*)$, that is
\begin{align*}
w = \ng^*  \frac{\beta}{\alpha} \exp\left(\frac{\|\ng^*\|^2}{2 \alpha}\right),
\end{align*}
in other words we have that $\ng^*$ and $w$ are in the same direction. Hence we can set $\ng^*= q w$, so that $f^*(w) \leq q \norm{w}^2 - \beta$. We now need to look for $q>0$, solving
\begin{align*}
\frac{q \beta}{\alpha}\exp\left(\frac{\norm{w}^2 q^2}{2 \alpha}\right)=1
&\Leftrightarrow \frac{\norm{w}^2 q^2}{2 \alpha} + \log \frac{\beta q}{\alpha} =0 \\
&\Leftrightarrow q = \sqrt{\frac{2 \alpha}{\norm{w}^2} \log \frac{\alpha}{q \beta}} = \sqrt{\frac{2 \alpha}{\norm{w}^2} \log \left( \frac{\sqrt{\alpha} \norm{w}}{\beta \sqrt{2 \log \frac{\alpha}{q \beta} }}\right) }~.
\end{align*}
Using the elementary inequality $\log x \leq \frac{m}{e} x^\frac{1}{m}, \forall m>0$, we have
\begin{align*}
&q^2 = \frac{2 \alpha}{\norm{w}^2} \log \frac{\alpha}{q \beta} \leq \frac{2 m \alpha}{e \norm{w}^2} \left(\frac{\alpha}{q \beta}\right)^\frac{1}{m} \\
&\Rightarrow q^{\frac{2m+1}{m}} \leq \frac{2 m \alpha^{\frac{m+1}{m}}}{e \norm{w}^2 \beta^\frac{1}{m}} \Rightarrow q \leq \left(\frac{2 m}{e \norm{w}^2 }\right)^\frac{m}{2m+1} \alpha^\frac{m+1}{2m+1} \beta^{-\frac{1}{2m+1}}\\
&\Rightarrow \frac{\alpha}{\beta q} \geq \left(\frac{e \norm{w}^2 }{2 m}\right)^\frac{m}{2m+1} \alpha^{1-\frac{m+1}{2m+1}} \beta^{\frac{1}{2m+1}-1}=\frac{\alpha}{\beta q} \geq \left( \sqrt{\frac{e}{2 m}} \frac{\norm{w} \sqrt{\alpha}}{\beta}\right)^{\frac{2m}{2m+1}}~.
\end{align*}
Hence we have
\begin{align*}
q \leq \sqrt{\frac{2 \alpha}{\norm{w}^2} \log \left( \frac{\sqrt{\alpha} \norm{w}}{\beta \sqrt{\frac{4m}{2m+1} \log \sqrt{\frac{e}{2 m}} \frac{ \norm{w} \sqrt{\alpha}}{\beta} }}\right) }~.
\end{align*}
We set $m$ such that $\sqrt{\frac{e}{2 m}} \frac{ \norm{w} \sqrt{\alpha}}{\beta} = \sqrt{e}$, that is $\frac{1}{2} \left(\frac{ \norm{w} \sqrt{\alpha}}{\beta}\right)^2 = m$. Hence we have $\log \sqrt{\frac{e}{2 m}} \frac{ \norm{w} \sqrt{\alpha}}{\beta} = \frac{1}{2}$ and $\frac{\frac{ \norm{w} \sqrt{\alpha}}{\beta}}{\sqrt{2m}}=1$, and obtain
\begin{align*}
f^*(w) \leq \norm{w} \sqrt{ 2 \alpha \log \left( \sqrt{\left(\frac{\sqrt{\alpha} \norm{w}}{\beta}\right)^2+1}\right) } -\beta
\leq \norm{w} \sqrt{ 2 \alpha \log \left(\frac{\sqrt{\alpha} \norm{w}}{\beta}+1\right) }-\beta~.
\end{align*}
\end{proof}

\subsection{Lemma~\ref{lemma:argmax_diff}}

\begin{lemma}
\label{lemma:diff_exp_decreasing}
Let $f(x)=b \exp\left(\frac{x^2}{a}\right)-\exp\left(\frac{x^2}{c}\right)$. If $a\geq c > 0$, $b\geq0$, and $b\, c \leq a$, then the function $f(x)$ is decreasing for $x\geq0$.
\end{lemma}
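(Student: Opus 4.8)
The plan is to prove the claim by showing $f'(x)\le 0$ for every $x\ge 0$, which is exactly the assertion that $f$ is (weakly) decreasing on $[0,\infty)$. Differentiating gives
\[
f'(x) \;=\; \frac{2bx}{a}\exp\!\left(\frac{x^2}{a}\right) \;-\; \frac{2x}{c}\exp\!\left(\frac{x^2}{c}\right),
\]
which vanishes at $x=0$, so it suffices to treat $x>0$. There we may divide by $2x>0$, and the desired inequality $f'(x)\le 0$ becomes
\[
\frac{b}{a}\exp\!\left(\frac{x^2}{a}\right) \;\le\; \frac{1}{c}\exp\!\left(\frac{x^2}{c}\right).
\]

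Next I would rearrange this (using $a,b,c>0$) into the equivalent form
\[
\frac{bc}{a} \;\le\; \exp\!\left(x^2\Big(\frac{1}{c}-\frac{1}{a}\Big)\right),
\]
and then finish by bounding the two sides separately. Since $a\ge c>0$, the exponent $x^2(\tfrac{1}{c}-\tfrac{1}{a})$ is nonnegative for $x\ge 0$, so the right-hand side is at least $1$; since $bc\le a$ with $b\ge 0$, the left-hand side is a nonnegative number that is at most $1$. Chaining the two bounds yields the inequality, hence $f'(x)\le 0$ on $(0,\infty)$, and together with $f'(0)=0$ this shows $f$ is decreasing on $[0,\infty)$.

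There is essentially no obstacle here; the only points requiring a little care are handling $x=0$ separately before dividing by $2x$, and correctly invoking the hypothesis $a\ge c$ (rather than the reverse) to fix the sign of $\tfrac{1}{c}-\tfrac{1}{a}$. The assumption $b\ge 0$ is used only to ensure that $bc/a$ is the genuine nonnegative quantity that the condition $bc\le a$ is meant to bound; in the degenerate case $b=0$ the claim is immediate since then $f(x)=-\exp(x^2/c)$.
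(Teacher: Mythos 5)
Your proof is correct and follows the same route the paper indicates: the paper's proof is simply the one-line remark that the claim ``is immediate from the study of the first derivative,'' and your argument fills in exactly that computation, reducing $f'(x)\le 0$ for $x>0$ to $\tfrac{bc}{a}\le \exp\bigl(x^2(\tfrac{1}{c}-\tfrac{1}{a})\bigr)$ and bounding the two sides by $1$ using $bc\le a$ and $a\ge c$. No issues.
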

\begin{proof}
The proof is immediate from the study of the first derivative.
\end{proof}

\begin{lemma}
\label{lemma:fract_decreasing}
Let $f(t)=\frac{a^\frac{3}{2}\,t\,\sqrt{t+1}}{(a\,(t+1)-b)^\frac{3}{2}}$, with $a\geq 3/2 b>0$. Then $f(t)\leq 1$ for any $t\geq0$.
\end{lemma}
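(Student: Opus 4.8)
The plan is to reduce the claimed inequality $f(t)\le 1$ to a polynomial inequality in one variable and then close it by a convexity (tangent-line) argument. First I would observe that for $t\ge 0$ the denominator $a(t+1)-b$ is strictly positive: it is increasing in $t$ and at $t=0$ it equals $a-b\ge\tfrac12 b>0$ by the hypothesis $a\ge\tfrac32 b$. Hence $f(t)\ge 0$, and since both sides of $f(t)\le 1$ are nonnegative, squaring gives an equivalent statement,
\[
a^3 t^2 (t+1)\ \le\ \bigl(a(t+1)-b\bigr)^3 .
\]
Substituting $s=t+1\ge 1$ (so $t=s-1$), this is $a^3(s-1)^2 s\le (as-b)^3$, i.e. $g(s)\ge 0$, where expanding and cancelling the $a^3 s^3$ terms gives
\[
g(s)\ =\ (as-b)^3 - a^3 s(s-1)^2\ =\ a^2(2a-3b)\,s^2 + a(3b^2-a^2)\,s - b^3 .
\]

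The key point is that $g$ is convex in $s$, because its leading coefficient $a^2(2a-3b)$ is nonnegative precisely when $a\ge\tfrac32 b$ (in the boundary case $a=\tfrac32 b$ it degenerates to an affine function, still convex). Therefore for every $s\ge 1$ the tangent-line lower bound gives $g(s)\ge g(1)+g'(1)(s-1)$, so it suffices to verify $g(1)\ge 0$ and $g'(1)\ge 0$. Short computations yield the pleasant factorizations $g(1)=a^3-3a^2 b+3ab^2-b^3=(a-b)^3$ and $g'(1)=3a^3-6a^2 b+3ab^2=3a(a-b)^2$, both nonnegative since $a>b>0$. Combining, $g(s)\ge 0$ for all $s\ge 1$, which unwinds to $f(t)\le 1$ for all $t\ge 0$.

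There is no genuine obstacle here: the only mild subtleties are checking that the denominator stays positive so that squaring is valid, and recognizing that the hypothesis $a\ge\tfrac32 b$ is exactly what makes the quadratic $g$ convex — once that is noted, the tangent-line evaluation at $s=1$ (using $g(1)=(a-b)^3$ and $g'(1)=3a(a-b)^2$) finishes the proof. I would present the computation of $g$, $g(1)$, and $g'(1)$ explicitly since they are short, and state the convexity step as the one nontrivial idea.
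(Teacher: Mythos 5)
Your proof is correct, and it takes a genuinely different route from the paper's. The paper argues dynamically: it computes the logarithmic derivative of $f$, shows that $f'$ has the same sign as $(2a-3b)(t+1)+b$ (hence $f$ is increasing under the hypothesis $a\ge \tfrac32 b$), and then observes that $f(t)\to 1$ as $t\to\infty$, so the increasing function never exceeds its limit. You instead make the inequality static and algebraic: after checking the denominator is positive you square, substitute $s=t+1$, and reduce to showing the quadratic $g(s)=a^2(2a-3b)s^2+a(3b^2-a^2)s-b^3$ is nonnegative on $s\ge 1$, which follows from convexity (the leading coefficient is nonnegative exactly when $a\ge\tfrac32 b$) together with $g(1)=(a-b)^3\ge 0$ and $g'(1)=3a(a-b)^2\ge 0$; all of these computations check out. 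Your argument is more self-contained — it avoids the limit at infinity and makes every step a finite verification — and it exposes the hypothesis $a\ge\tfrac32 b$ as precisely the convexity condition on $g$. The paper's argument is shorter to state and gives the additional qualitative information that $f$ is increasing with asymptote $1$ (so the bound is tight in the limit), which your tangent-line argument does not directly reveal, though it is implicit in the vanishing of the cubic terms. Either proof is acceptable.
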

\begin{proof}
The sign of the first derivative of the function has the same sign of
\[
(2\,a-3b)(t+1)+b,
\]
hence from the hypothesis on $a$ and $b$ the function is strictly increasing. Moreover the asymptote for $t\rightarrow \infty$ is 1, hence we have the stated upper bound.
\end{proof}

\begin{lemma}
\label{lemma:argmax_diff}
Let $f_t(x)=\beta_{t} \bef{x}{2a t}$, $\beta_{t+1}\leq \beta_t,\  \forall t$. If $a\geq \frac{3 \pi G^2}{4}$, then
\[
\argmax_{x} \E_{\phi \sim N(0, \sigma^2)} [f_{t+1}(x + \phi G)] - f_t(x) = 0,
\]
where $\sigma^2=\frac{\pi}{2}$.
\end{lemma}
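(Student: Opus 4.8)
The plan is to evaluate the Gaussian expectation in closed form, thereby reducing the claim to the statement that a difference of two Gaussian-type functions is non-increasing on $[0,\infty)$, and then to invoke Lemmas~\ref{lemma:diff_exp_decreasing} and~\ref{lemma:fract_decreasing} with the appropriate parameter substitutions. Writing $x\ge 0$ for the role played by a norm, the quantity whose argmax we want is
\[
g(x)\;:=\;\E_{\phi\sim N(0,\sigma^2)}\!\big[f_{t+1}\big((x+\phi G)^2\big)\big]\;-\;f_t(x^2)\;=\;\beta_{t+1}\,\E_{\phi}\!\big[\exp\tfrac{(x+\phi G)^2}{2a(t+1)}\big]\;-\;\beta_t\exp\tfrac{x^2}{2at},
\]
which is exactly the expression bounded in Section~\ref{sec:inc} (there with $x=\norm{\ng_t}$). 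Since $\phi$ is symmetric, $g$ is even, so it suffices to show $g$ is non-increasing on $[0,\infty)$; this also covers the only use of the lemma, where $x=\norm{\ng_t}\ge 0$.

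First I would record the Gaussian identity: for $\phi\sim N(0,\pi/2)$ and any $c>\pi G^2$,
\[
\E_{\phi}\!\big[\exp\tfrac{(x+\phi G)^2}{c}\big]\;=\;\Big(1-\tfrac{\pi G^2}{c}\Big)^{-1/2}\exp\tfrac{x^2}{c-\pi G^2},
\]
which follows from the moment-generating formula $\E[\exp(\lambda\phi^2+\mu\phi)]=(1-2\lambda\sigma^2)^{-1/2}\exp\tfrac{\mu^2\sigma^2}{2(1-2\lambda\sigma^2)}$ with $\lambda=G^2/c$, $\mu=2xG/c$, after collecting the two exponents. Taking $c=2a(t+1)$, the integrability condition $c>\pi G^2$ holds because $a\ge\tfrac34\pi G^2$ and $t+1\ge1$ give $2a(t+1)\ge\tfrac32\pi G^2>\pi G^2$; hence
\[
g(x)\;=\;b\,\exp\tfrac{x^2}{A}\;-\;\beta_t\exp\tfrac{x^2}{C},\qquad A:=2a(t+1)-\pi G^2,\quad C:=2at,\quad b:=\beta_{t+1}\Big(1-\tfrac{\pi G^2}{2a(t+1)}\Big)^{-1/2}>0.
\]

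Dividing by $\beta_t>0$ (which does not change the argmax), I would apply Lemma~\ref{lemma:diff_exp_decreasing} to $g/\beta_t=(b/\beta_t)\exp(x^2/A)-\exp(x^2/C)$, with its ``$a$'' equal to $A$, its ``$c$'' equal to $C$, and its ``$b$'' equal to $b/\beta_t$; the three hypotheses become: (i)~$A\ge C\iff 2a\ge\pi G^2$, which follows from $a\ge\tfrac34\pi G^2$; (ii)~$b/\beta_t\ge0$, clear; (iii)~$(b/\beta_t)\,C\le A$. For (iii), using $\beta_{t+1}\le\beta_t$ I bound $b/\beta_t\le(1-\pi G^2/(2a(t+1)))^{-1/2}=\sqrt{2a(t+1)/A}$, so $(b/\beta_t)C\le 2at\sqrt{2a(t+1)}/\sqrt A=(2a)^{3/2}t\sqrt{t+1}/\sqrt A$, and (iii) reduces to $(2a)^{3/2}t\sqrt{t+1}\le A^{3/2}$, i.e. $\tfrac{(2a)^{3/2}t\sqrt{t+1}}{(2a(t+1)-\pi G^2)^{3/2}}\le1$ --- which is precisely Lemma~\ref{lemma:fract_decreasing} with its ``$a$'' $=2a$ and ``$b$'' $=\pi G^2$, whose hypothesis ``$a\ge\tfrac32 b$'' reads $2a\ge\tfrac32\pi G^2$, i.e. $a\ge\tfrac34\pi G^2$ --- the standing assumption. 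So $g/\beta_t$, hence $g$, is non-increasing on $[0,\infty)$, and together with evenness this gives $\argmax_x g(x)=0$.

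The Gaussian integral and the algebra are routine; the only thing needing care is the bookkeeping of which inequality on $a$ is used where --- $a\ge\pi G^2/2$ already delivers $A\ge C$, while the full strength $a\ge\tfrac34\pi G^2$ is consumed exactly by condition~(iii) through Lemma~\ref{lemma:fract_decreasing} --- together with checking that the Gaussian expectation is finite, i.e. $2a(t+1)>\pi G^2$, which the hypothesis guarantees. Thus I expect the main (and fairly mild) obstacle to be orchestrating the two appendix lemmas with matching parameters rather than any genuine analytic difficulty.
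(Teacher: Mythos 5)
Your proposal is correct and follows essentially the same route as the paper's own proof: evaluate the Gaussian expectation in closed form via the moment-generating function, use evenness to reduce to monotonicity on $[0,\infty)$, and then verify the hypotheses of Lemma~\ref{lemma:diff_exp_decreasing} by invoking Lemma~\ref{lemma:fract_decreasing} (the paper applies it with its ``$a$''$=a$ and ``$b$''$=\sigma^2G^2=\pi G^2/2$, which is the same inequality as your $2a$, $\pi G^2$ substitution). Your explicit bookkeeping of where $a\geq\tfrac{3\pi G^2}{4}$ versus the weaker $a\geq\tfrac{\pi G^2}{2}$ is used, and your reading of $f_{t+1}(x+\phi G)$ as $f_{t+1}((x+\phi G)^2)$ consistent with Section~\ref{sec:inc}, are both in line with the paper's (more tersely stated) argument.
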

\begin{proof}
We have 
\[
\E_{\phi \sim N(0, \sigma^2)} [f_{t+1}(x + \phi G)] = \beta_{t+1} \sqrt{\frac{a\,(t+1)}{a\,(t+1)-\sigma^2 G^2}} \exp\left(\frac{x^2}{2\left[a\,(t+1)-\sigma^2 G^2\right]}\right),
\]
so we have to study the max of 
\[
\beta_{t+1} \sqrt{\frac{a\,(t+1)}{a\,(t+1)-\sigma^2 G^2}} \exp\left(\frac{x^2}{2\left[a\,(t+1)-\sigma^2 G^2\right]}\right) - \beta_{t} \exp\left(\frac{x^2}{2\,a\,t}\right)~.
\]
The function is even, so we have a maximum in zero iff the function is decreasing for $x>0$.
Observe that, from Lemma~\ref{lemma:fract_decreasing}, for any $t\geq0$
\[
\sqrt{\frac{a\,(t+1)}{a\,(t+1)-\sigma^2 G^2}} \frac{a\,t}{a\,(t+1)-\sigma^2 G^2} \leq 1~.
\]
Hence, using Lemma~\ref{lemma:diff_exp_decreasing}, we obtain that the stated result.
\end{proof}

\end{document}